\newcommand{\parafango}[1]{\textbf{#1.}$\ \ $}
\newif
\definecolor{refkey}{rgb}{0,.6,0}
\definecolor{labelkey}{cmyk}{0, 1, 0, 0}
\newtheorem{remark}{Remark}
\newtheorem{theorem}{Theorem}
\newtheorem{proposition}{Proposition}
\newtheorem{corollary}{Corollary}
\def\R{\mathbb{R}}
\def\C{\mathbb{C}}
\def\Id{\mathrm{Id}}
\def\diag{\mathrm{diag}}
\def\spec{\mathop{\rm spec}}
\def\Re{\mathop{\rm Re}\nolimits}
\def\sigmavec{{\boldsymbol{\sigma}}}
\newenvironment{proof}[1][Proof]{\noindent\textit{#1:} }{\hfill$\square$\par} 
\title{Generative System Dynamics \\ in Recurrent Neural Networks \footnotemark[1]\hspace{0.5em}\footnotemark[2]}
\author{
\href{https://scholar.google.com/citations?user=yY2OyccAAAAJ}{Michele Casoni\textsuperscript{1}},
\hspace{1mm}
\href{https://scholar.google.com/citations?user=7zkmhw0AAAAJ}{Tommaso Guidi\textsuperscript{1,2}},
\hspace{1mm}
\href{https://scholar.google.com/citations?user=SBgRcNUAAAAJ}{Alessandro Betti\textsuperscript{3}},
\hspace{1mm}
\href{https://scholar.google.com/citations?user=_HHu1MQAAAAJ}{Stefano Melacci\textsuperscript{1}},
\hspace{1mm}
\href{https://scholar.google.com/citations?user=wBMRRk0AAAAJ}{Marco Gori\textsuperscript{1}}
\\
(1) Department of Information Engineering and Mathematics, University of Siena, Siena, Italy\\
(2) Department of Information Engineering, University of Florence, Florence, Italy\\
(3) IMT School for Advanced Studies Lucca, Lucca, Italy\\
\texttt{m.casoni@student.unisi.it}, \texttt{tommaso.guidi@unifi.it}, \texttt{alessandro.betti@imtlucca.it} \\ \texttt{stefano.melacci@unisi.it}, \texttt{marco.gori@unisi.it}\\
}
\date{}
\begin{document}
\maketitle
\footnotetext[1]{Copyright © 2025 IEEE.  Personal use of this material is permitted.  Permission from IEEE must be obtained for all other uses, in any current or future media, including reprinting/republishing this material for advertising or promotional purposes, creating new collective works, for resale or redistribution to servers or lists, or reuse of any copyrighted component of this work in other works.}
\footnotetext[2]{Accepted for publication at the 2025 IEEE International Joint Conference on Neural Networks (IJCNN).}
\begin{abstract}
In this study, we investigate the continuous time dynamics of Recurrent Neural Networks (RNNs), focusing on systems with nonlinear activation functions. The objective of this work is to identify conditions under which RNNs exhibit perpetual oscillatory behavior, without converging to static fixed points. We establish that skew-symmetric weight matrices are fundamental to enable stable limit cycles in both linear and nonlinear configurations. We further demonstrate that hyperbolic tangent-like activation functions (odd, bounded, and continuous) preserve these oscillatory dynamics by ensuring motion invariants in state space. Numerical simulations showcase how nonlinear activation functions not only maintain limit cycles, but also enhance the numerical stability of the system integration process, mitigating those instabilities that are commonly associated with the forward Euler method. The experimental results of this analysis highlight practical considerations for designing neural architectures capable of capturing complex temporal dependencies, i.e., strategies for enhancing memorization skills in recurrent models.
\end{abstract}

\section{Introduction}\label{sec:intro}
Recurrent neural networks (RNNs) are fundamental tools for modeling temporal dependencies in sequential data, finding applications in fields such as time-series forecasting, control systems, and natural language processing, recently reconsidered also when dealing with very long sequences \cite{tiezzi2024resurgencerecurrentmodelslong}. However, training RNNs in an online fashion while learning long-term dependencies presents significant challenges. The problem of vanishing and exploding gradients \cite{bengio1994learning,bengio1993credit} remains a key obstacle, causing models to lose track of the past information or become unstable when gradients are propagated over time. Recent advancements in sequence modeling have introduced Deep State-Space Models (SSMs) \cite{gu2021efficiently,smith2022simplified}, which leverage state-space representations of dynamic systems to capture long-range dependencies and exhibit strong generalization capabilities. Initializing these models with complex eigenvalues near the unit circle has been shown to enhance their ability to store and propagate information over long horizons \cite{orvieto2023universality}. Despite these advances, learning long-term dependencies online---without storing or revisiting historical data---remains a significant challenge \cite{casoni2024pitfalls}.
Moreover, when considering online lifelong learning scenarios \cite{parisi2020online}, the problem of long-range dependencies is particularly relevant not only during training (when computing gradients) but also during inference \cite{gunasekara2023survey}. Unlike models designed to make predictions on separate sequences, agents operating on a continuous stream of information must sustain non-trivial behavior even in the absence of external stimuli.
This highlights the significance of \emph{perpetual generation}, which involves autonomously producing coherent, infinite-length sequences by learning the dynamics of a single data stream. Crucially, this capability allows models to evolve their state without collapsing into trivial solutions such as static fixed points or vanishing outputs. The importance of perpetual generation lies in its ability to sustain non-trivial dynamics, enabling systems to effectively capture long-term dependencies while avoiding degradation over time. A fundamental aspect of achieving perpetual generation is understanding and controlling the “free dynamics” of recurrent neural networks (RNNs), i.e., their behavior when left entirely uninfluenced by external inputs or supervision. Studying these dynamics reveals architectural conditions necessary for stability and coherence, preventing constant or vanishing outputs that would otherwise limit the network’s ability to retain and process information across extended timescales.

\parafango{Contributions} In this work, we focus on identifying design principles for RNNs that confine the free evolution of their state to stable limit cycles, thereby guaranteeing perpetual oscillations without convergence to fixed points or divergence to unstable behaviors. Specifically, we explore architectural choices involving activation functions and spectral properties of the state matrix that enable such oscillatory dynamics.
The contributions of this paper are summarized as follows: ($i$) we show that skew-symmetric weight matrices guarantee perpetual oscillatory behavior in both linear and nonlinear RNNs; ($ii$)  we demonstrate that even with nonlinear activation functions
this behaviors can be preserved giving a complete theoretical analysis for the case of the hyperbolic tangent; ($iii$) we validate our theoretical findings through numerical simulations, providing visual insights into the state trajectories and oscillatory patterns.

\section{State Space Systems Dynamics}\label{sec:ss-dyn}
Dynamic systems of ordinary differential equations (ODEs) are often used to describe the evolution of processes over time. Depending on the structure of the ODEs and the parameters involved, these systems can exhibit different behaviors, from periodic oscillations to complex chaotic patterns.

\parafango{System Dynamics Overview}
Classic dynamic systems often present \emph{oscillations}, where the system naturally settles into periodic orbits known as \emph{limit cycles}, regardless of initial conditions. One example is the Van der Pol oscillator \cite{vanderPol1926oscillations}, a nonlinear system characterized by a damping parameter that produces stable limit cycles without chaotic behavior. Similarly, the FitzHugh-Nagumo model \cite{FITZHUGH1961445,4066548}, commonly used to describe neural dynamics, supports stable oscillations under specific parameter settings. Another important system is the Lotka-Volterra predator-prey model \cite{lotka1925elements,VolterraFluctuationsIT}, where populations of prey and predators oscillate periodically under idealized conditions. Other systems exhibit \emph{chaotic} dynamics characterized by sensitivity to initial conditions and parameters selection. The Lorenz system \cite{Lorenz1963DeterministicNF}, for instance, demonstrates deterministic chaos when specific parameter values are applied. The Rössler system \cite{ROSSLER1976397} also displays chaotic attractors within suitable parameter ranges. Additionally, Chua’s circuit \cite{chua1983genesis} is well-known for generating chaotic behavior in electronic circuits due to its piecewise-linear functions. Some systems are capable of transitioning between periodic oscillations and chaos, depending on parameters variations. The Duffing oscillator \cite{duffing1918erzwungene} illustrates this phenomenon by exhibiting both limit cycles and chaotic dynamics as parameters such as damping and external forcing amplitude are changed. Similarly, the Hindmarsh-Rose model \cite{hindmarsh1984neuronal} for neural dynamics can transition between periodic spiking and chaotic bursting under varying external stimuli. Modeling such dynamics can be challenging due to the complexity of many real-world systems and the need to account for nonlinearities and feedback mechanisms. A promising approach to modeling these dynamics is through the use of Recurrent Neural Networks. RNNs are well-suited to capture the temporal dependencies and feedback present in many dynamic systems, including those exhibiting oscillations and chaotic behaviors.

\parafango{Modeling dynamics with RNNs}
RNNs are often represented by continuous-time dynamic systems of the following form\footnote{For the sake of definiteness and simplicity, and since this is not a crucial point in our work, we will consider classical solutions of ODE systems---i.e., sufficiently smooth functions for which differential conditions can be expressed pointwise for all  $t$.}
\begin{equation}
\label{eq:ctrnn-general}
\begin{cases}
    \dot x (t) = \sigmavec \left(A x (t) + B u(t) \right),&t\geq0 \\
    y(t) = \sigmavec_{\text{out}} \left( C x(t) + D u(t)\right),&t\geq0 \\
    x(0) = \bar x,
\end{cases}
\end{equation}
where $t\mapsto x(t)\in\R^n$ is the \emph{state} trajectory of the RNN, $\bar x$
is the initial condition of the state, $t\mapsto u(t)\in\R^d$ is the input provided to the network and $t\mapsto y(t)\in\R^m$ is its output. Matrices $A \in \mathbb{R}^{n \times n}$, $B \in \mathbb{R}^{n \times d}$, $C \in \mathbb{R}^{m \times n}$ and $D \in \mathbb{R}^{m \times d}$ are the parameters of the model, with $A$ and $C$ usually called the \emph{state} and \emph{output} matrix, respectively. The functions $\sigmavec\colon \mathbb{R}^n \to \mathbb{R}^n$ and $\sigmavec_\text{out}\colon \mathbb{R}^m \to \mathbb{R}^m$ apply the \emph{activation functions} $\sigma\colon \mathbb{R} \to \mathbb{R}$ and $\sigma_{\text{out}}\colon \mathbb{R} \to \mathbb{R}$ component-wise to a vector $v \in \mathbb{R}^n$ (or $\mathbb{R}^m$), respectively. Specifically, the component-wise activation function acts as follows:
\begin{equation}
    (\sigmavec(v))_i = \sigma(v_i),\quad \forall v\in\R^n 
\end{equation}
for every component $i = 1,\dots, n$. The same with $m$ instead of $n$ holds for  
$\sigmavec_{\rm out}$. A feedback mechanism can be also added in the state dynamics, leading to
\begin{equation}
\label{eq:ctrnn-feedback}
    \dot x (t) =  - \alpha x(t) + \sigmavec \left(A x (t) + B u(t) \right),\quad \alpha>0.
\end{equation}
In all the equations, we avoided to insert bias terms in the activations of the neurons. In this work, we focus on the free evolution of RNNs over time, assuming no input is provided and the model parameters are time-independent. Therefore, we consider a reduced form of the dynamic system in Eq.~\eqref{eq:ctrnn-general}, where the matrices $B$ and $D$ are set to zero, and $A$ and $C$ are constant. Additionally, since our primary interest lies in examining the evolution of the state $x(t)$ over time, we omit the output mapping $y(t)$ in the following analysis. These simplifications lead us to consider the system:
\begin{equation}
\label{eq:ctrnn-no-input}
\begin{cases}
    \dot x (t) = \sigmavec \left(A x (t) \right), \\
    x(0) = \bar x.
\end{cases}
\end{equation}
The continuous-time dynamical system in Eq.~\eqref{eq:ctrnn-no-input} can be discretized using a variety of schemes, the simplest of which is the 
\emph{forward (or esplicit) Euler method} with a step size $\tau$, yielding  
\begin{equation}
\label{eq:ctrnn-euler}
    x_{k+1} = x_k + \tau \sigmavec(A x_k),\quad x_0=\bar x.
\end{equation}  
This discretization will be used to evolve the state equations in the experiments (Section~\ref{sec:exp}). In what follows, we will present conditions that ensure the RNN dynamics exhibit perpetual oscillations without converging to static fixed points, as already said in Section~\ref{sec:intro}. First, we will review some well-known results for Linear RNNs (Section~\ref{sec:linear-ctrnns}) and then transition to the nonlinear case (Section~\ref{sec:non-linear-ctrnns}).

\section{Linear RNNs}\label{sec:linear-ctrnns}
In Linear RNNs, the activation function $\sigma$ is the identity map, i.e., $\sigma = \operatorname{id}$. Therefore, the dynamic system in Eq.~\eqref{eq:ctrnn-no-input} simplifies to
\begin{equation}
\label{eq:ctrnn-linear}
\begin{cases}
    \dot x (t) = A x (t), \\
    x(0) = \bar x.
\end{cases}
\end{equation}
The exact solution of the state equation with initial condition $x(0)=\bar x$ is $x(t)=e^{tA}\bar x$ \cite{coddington1955theory}, where $e^{tA}$ is the matrix exponential defined as
\begin{equation}
\label{eq:matrix-exp}
    e^{tA} := \sum_{k=0}^{+ \infty} \frac{(tA)^{k}}{k!} =\Id + tA + \frac{(tA)^2}{2!} + \frac{(tA)^3}{3!}
    + \cdots.
\end{equation}
The stability of the solution $x(t) = e^{tA} \bar{x}$ is
determined by the spectrum (set of the eigenvalues) of $A$ \cite{thompson2013ordinary}, 
denoted as $\spec(A)$\footnote{It consists of $n$
eigenvalues counted with their multiplicity.}. In particular, if $\lambda \in \spec(A)$ is an eigenvalue of $A$, the solution of the linear dynamic system in Eq.~\eqref{eq:ctrnn-linear} exhibits the following behaviors \cite[p.~308]{thompson2013ordinary}:

\begin{theorem}
\label{thm:stability-cont}
Let $A\in\R^{n \times n}$  a matrix over the complex field
and consider the linear system in Eq.~\eqref{eq:ctrnn-linear}.
The stability of the solution $x(t) = e^{tA} \bar{x}$ is
determined by $\spec(A)$. In particular 
the following facts hold true:
\begin{enumerate}
\item  The system is \emph{asymptotically stable} if\/
$\Re(\lambda) < 0$ for all $\lambda\in\spec(A)$.
\item The system is \emph{marginally stable} if
$\Re(\lambda) \leq 0$ for all $\lambda\in\spec(A)$
and, for every $\lambda$  such that $\Re(\lambda) = 0 \), $m_a(\lambda)= m_g(\lambda)$ (algebraic multiplicity 
$m_a(\lambda)$ is equal to the geometric multiplicity $m_g(\lambda)$).\footnote{This also means that 
the Jordan Blocks associated to $\lambda$ are all $1\times 1$.}
\item The system is \emph{unstable} in all other cases, that is if there  exists $\lambda\in \spec(A)$ such that $\Re(\lambda) > 0$, 
or if\/ $\Re(\lambda_n) = 0$ and $m_a(\lambda)\ne m_g(\lambda)$.
\end{enumerate}
\end{theorem}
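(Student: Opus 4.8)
The plan is to reduce the stability analysis of $x(t) = e^{tA}\bar x$ to the behavior of the matrix exponential on each generalized eigenspace, which is cleanly exposed by the Jordan canonical form. First I would write $A = P J P^{-1}$ with $J$ the Jordan normal form of $A$, so that $e^{tA} = P e^{tJ} P^{-1}$; since $P$ and $P^{-1}$ are fixed invertible matrices, the growth or decay of $\|x(t)\|$ as $t\to\infty$ is governed entirely by $\|e^{tJ}\bar y\|$ where $\bar y = P^{-1}\bar x$. Because $J$ is block diagonal, $e^{tJ}$ is block diagonal with blocks $e^{tJ_i}$, so it suffices to understand a single Jordan block $J_i = \lambda I + N$ of size $p$ with $N$ nilpotent: there $e^{tJ_i} = e^{\lambda t}\bigl(I + tN + \tfrac{t^2}{2!}N^2 + \cdots + \tfrac{t^{p-1}}{(p-1)!}N^{p-1}\bigr)$, so every entry is a polynomial in $t$ of degree at most $p-1$ multiplied by $e^{\lambda t}$, and $|e^{\lambda t}| = e^{\Re(\lambda)t}$.

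The three cases then follow by elementary asymptotics of $t^{j} e^{\Re(\lambda) t}$. For case (1): if $\Re(\lambda) < 0$ for every eigenvalue, then every entry of $e^{tJ}$ tends to $0$ (exponential decay dominates any polynomial), hence $\|e^{tA}\| \to 0$ and $x(t)\to 0$ for all $\bar x$, giving asymptotic stability; one can even extract an explicit bound $\|e^{tA}\| \le M e^{-\mu t}$ with $\mu = -\max_\lambda \Re(\lambda) > 0$. For case (2): eigenvalues with $\Re(\lambda) < 0$ contribute decaying terms as before, while an eigenvalue with $\Re(\lambda) = 0$ and $m_a(\lambda) = m_g(\lambda)$ has only $1\times 1$ Jordan blocks, so the corresponding contribution is simply $e^{\lambda t}$ with $|e^{\lambda t}| = 1$ — bounded, no polynomial factor — and therefore $\|x(t)\|$ stays bounded for all $\bar x$, i.e. marginal stability. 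For case (3): if some $\lambda$ has $\Re(\lambda) > 0$, choosing $\bar x$ to be a corresponding eigenvector gives $x(t) = e^{\lambda t}\bar x$ with $\|x(t)\| = e^{\Re(\lambda)t}\|\bar x\| \to \infty$; and if $\Re(\lambda) = 0$ but $m_a(\lambda) > m_g(\lambda)$, there is a Jordan block of size $p \ge 2$, so picking $\bar x$ in that block produces a term behaving like $t^{p-1} e^{\lambda t}$ in some coordinate, whose modulus grows polynomially and hence is unbounded — instability in both sub-cases.

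The one genuinely delicate point is case (2): one must argue carefully that the off-diagonal polynomial growth is truly absent precisely when algebraic and geometric multiplicities coincide on the imaginary axis, and conversely that a single nontrivial Jordan block on the imaginary axis forces unboundedness. This is where the footnote's observation — that $m_a(\lambda) = m_g(\lambda)$ is equivalent to all Jordan blocks for $\lambda$ being $1\times 1$ — does the real work, and I would state it explicitly as the bridge between the multiplicity hypothesis and the structure of $e^{tJ}$. Everything else is the standard fact that, for real $c$ and nonnegative integer $j$, $t^{j} e^{ct} \to 0$ if $c < 0$, stays bounded iff ($c < 0$, or $c = 0$ and $j = 0$), and $\to \infty$ otherwise; I would invoke this without belaboring the estimates. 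The whole argument is classical (it is essentially the proof in \cite{thompson2013ordinary}), so I would keep it brief and emphasize the Jordan-form reduction as the organizing idea.
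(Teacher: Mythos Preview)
Your proposal is correct and follows essentially the same approach as the paper: Jordan decomposition $A = PJP^{-1}$, reduction of $e^{tA}$ to block-diagonal $e^{tJ}$, and analysis of each Jordan block via the explicit formula $e^{tJ_i} = e^{\lambda t}\sum_{k} t^k N^k/k!$. The paper in fact cites this theorem from \cite{thompson2013ordinary} without a formal proof and only sketches the marginal-stability case (2) using exactly the ingredients you list; your treatment is simply more complete in spelling out cases (1) and (3) and in making explicit the trichotomy for $t^j e^{ct}$.
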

Since we are interested in conditions that guarantee perpetual oscillations of the state that can translate to generative skills, we focus on the case of marginal stability. This is explained as follows: for any (real or complex) matrix $A$, there exists an invertible matrix $P$ such that $A = P J P^{-1}$, where $J$ is the Jordan form of $A$. Then, from the definition of matrix exponential and being $P$ invertible, it immediately follows that
\begin{equation}
    e^{tA} = P e^{tJ} P^{-1},
\end{equation}
where $e^{tJ}$ can be computed block by block. For a Jordan block \cite[p.~164]{horn2012matrix} $J_n$ associated with the eigenvalue $\lambda_n$, we have
\begin{equation}
    e^{t J_n} = e^{\lambda_n t}
\begin{bmatrix}
1 & t & \frac{t^2}{2!} & \cdots \\
0 & 1 & t & \cdots \\
0 & 0 & 1 & \cdots \\
\vdots & \vdots & \vdots & \ddots
\end{bmatrix}.
\end{equation}
Focusing on marginal stability, if $\text{Re}(\lambda_n) < 0 $, $e^{\lambda_n t} \to 0$. In this case, $e^{tJ_n}$ is bounded, and $\|x(t)\|$ remains bounded. If $\text{Re}(\lambda_n) = 0$, $\lambda_n$ is purely imaginary, i.e., $\lambda_n = i \omega_n$ for some real $\omega_n$. Moreover, since $m_a(\lambda)= m_g(\lambda)$, $J_n =\lambda_n$ (the Jordan block is a $1\times1$ block). Then, by the Euler formula,
\begin{equation}
    e^{\lambda_nt} = e^{i \omega_n t} = \cos(\omega_nt) + i\sin(\omega_n t).
\end{equation}
This implies that the solution oscillates perpetually. To guarantee marginal stability, a possible condition is for the matrix $A$ to be \emph{skew-symmetric}, i.e., satisfying $A = -A'$, where the prime denote the transpose operation. Indeed, the eigenvalues of a skew-symmetric matrix are purely imaginary (see Corollary 2.5.11 in \cite{horn2012matrix}), ensuring marginal stability for the solution of Eq.~\eqref{eq:ctrnn-linear}. Another important property of skew-symmetric matrices that will be used in the next Section is stated in Proposition~\ref{prop:xAx-skew}. Let $\overline{z}$ denote the complex conjugate of a complex number $z \in \mathbb{C}$, and let $M^*$ denote the conjugate transpose of a matrix $M$. Additionally, let $\langle \cdot, \cdot \rangle$ denote the standard inner product in the complex vector space $\mathbb{C}^{n}$.
\begin{proposition}
\label{prop:xAx-skew}
 If \(A \in \mathbb{R}^{n \times n}\) is skew-symmetric, then
\[
\langle x, A x \rangle = - \overline{\langle x, A x \rangle},
\]
for every $x\in \C^{n}$.   
\end{proposition}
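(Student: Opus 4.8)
The plan is to compute $\overline{\langle x, Ax\rangle}$ directly and read off that it equals $-\langle x, Ax\rangle$, relying on two elementary observations: $\langle x, Ax\rangle$ is a scalar (hence unchanged under transposition), and $A$ has \emph{real} entries, so its conjugate transpose coincides with its ordinary transpose, which by hypothesis is $-A$.

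Concretely, fixing the convention that the Hermitian inner product conjugates its first argument, write $\langle x, Ax\rangle = x^* A x$, where $x^*$ is the conjugate transpose of the vector $x$. Starting from $\overline{\langle x, Ax\rangle} = \overline{x^* A x}$ and using that $x^* A x$ is a $1\times 1$ matrix, complex conjugation agrees with conjugate transposition, so $\overline{x^* A x} = (x^* A x)^* = x^* A^* x$. Now invoke $A \in \mathbb{R}^{n\times n}$ to get $A^* = \overline{A}{}' = A'$, and then the skew-symmetry hypothesis $A' = -A$, yielding $x^* A^* x = -x^* A x = -\langle x, Ax\rangle$. Rearranging gives the stated identity, valid for every $x\in\C^n$.

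The argument is short and carries essentially no obstacle; the only point needing a little care is the bookkeeping with the sesquilinear convention (which slot is conjugated) together with the use of $\overline{A}=A$ — without the realness of $A$, skew-symmetry of $A$ alone would not close the computation. It is worth noting, as a remark, that the conclusion says precisely that $\langle x, Ax\rangle$ is purely imaginary for all $x$, the Hermitian-form analogue of the classical fact that $x' A x = 0$ for a real skew-symmetric $A$ and real $x$; this is exactly the property that will later supply a conserved quantity in the nonlinear analysis.
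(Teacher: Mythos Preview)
Your proof is correct. It is also somewhat more direct than the paper's own argument: rather than computing $\overline{x^{*}Ax}=(x^{*}Ax)^{*}=x^{*}A^{*}x$ and then invoking $A^{*}=A'=-A$ as you do, the paper first rewrites $A=(A-A')/2$, moves the adjoint across the inner product via $\langle x, A^{*}x\rangle=\langle Ax, x\rangle$ and $\langle x, Ax\rangle=\langle A^{*}x, x\rangle$, and finally uses conjugate symmetry $\langle Ax, x\rangle=\overline{\langle x, Ax\rangle}$. Both routes hinge on exactly the same two facts---realness giving $A^{*}=A'$ and skew-symmetry giving $A'=-A$---so the difference is packaging: your scalar/conjugate-transpose trick collapses the whole computation into one line, while the paper's adjoint manipulation is slightly more inner-product-theoretic but introduces an intermediate decomposition that is not really needed. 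Your closing remark that the identity says $\langle x, Ax\rangle$ is purely imaginary is also apt and matches the use the paper makes of the proposition downstream.
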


\begin{proof}
Since $A$ is skew-symmetric then $A =(A -A')/2$. 
Then, chosen any $x\in\C^{n}$, we have
\[\begin{aligned}
\langle x, Ax \rangle &= \frac{1}{2}\langle x, (A-A')x \rangle
=\frac{1}{2}(\langle x, Ax \rangle -\langle x, A^*x \rangle)\\
&=\frac{1}{2}(\langle A^*x, x \rangle -\langle Ax, x \rangle)
=-\langle Ax, x \rangle=-\overline{\langle x, Ax \rangle}.
\end{aligned}\]
\end{proof}

\begin{remark}\label{rem:energy-cons}
 For every $x \in \R^{n}$, $\left( x, A x \right) = 0$, where $\left( x, A x \right)$ is the standard inner product in the real vector space $\mathbb{R}^n$. This orthogonality between $x$ and $Ax$ has profound implications for the dynamics of systems governed by skew-symmetric matrices. In geometric terms, it means that the action of the linear transformation defined by $A$ rotates vectors around the origin without changing their magnitude. This behavior can be interpreted as inducing \emph{rotational dynamics} in the state space. Indeed, let us consider the inner product $\left(x(t), x(t) \right) = \|x(t)\|^2$. Its time derivative is
 \begin{equation}\label{eq:energy-cons}
     \frac{d}{dt} \|x(t)\|^2 = 2 \left(x(t), \dot x (t)\right) = 2 \left(x(t), Ax(t) \right) = 0.
 \end{equation}
 Therefore, the magnitude of $x(t)$ is constant as the system evolves. This invariance of the norm implies that the system trajectories are confined to surfaces of constant radius, which geometrically correspond to circular orbits in the state space. The preservation of the norm and the orthogonality between $x(t)$ and $Ax(t)$ are characteristic features of rotational dynamics. 
\end{remark}

\section{Nonlinear Activation function}\label{sec:non-linear-ctrnns}
In Section~\ref{sec:linear-ctrnns} we highlighted the critical role of skew-symmetric matrices for ensuring perpetual oscillations in Linear RNNs. 
That analysis heavily depended on the linearity of the model 
(Theorem~\ref{thm:stability-cont}).
In this section, instead,  we show that this property can be retained also in the presence 
of nonlinear activation functions (Eq.~\eqref{eq:ctrnn-no-input}).
In particular we analyze in details the case where $\sigmavec$ 
is the hyperbolic tangent.

\subsection{Lyapunov Stability Theorem}
In our analysis, we make three key assumptions: ($i$) the state matrix $A$ is skew-symmetric; ($ii$) the nonlinear activation function $\sigma$ is odd, bounded, and continuous, meaning $\sigma(-a) = -\sigma(a)$ for every scalar $a \in \mathbb{R}$; ($iii$) no additional dissipative terms are present. Important properties arise from the oddity and boundedness of $\sigma$ required by assumption ($ii$): the first implies that the flow induced by \( \sigmavec(Ax) \) is symmetric with respect to the origin, resulting in balanced state trajectories that do not exhibit drift; the latter ensures that the trajectories are confined to a bounded region in the state space. Examples of nonlinear activation functions satisfying conditions in assumption ($ii$) are the hyperbolic tangent $\sigma = \tanh$ and its rectified version $\sigma = \text{hardtanh}$, defined as the piece-wise linear function
\begin{equation}
\label{eq:hardtanh}
    \text{hardtanh}(a) = \begin{cases}
        a, \qquad \qquad \text{if } |a| \leq 1, \\
        \text{sgn}(a), \qquad \text{otherwise}.
    \end{cases}
\end{equation}
These two examples of odd nonlinear activation functions are used in our experimental campaign. Our objective is to demonstrate that this setup guarantees oscillatory behavior in the system dynamics. In practice, we would like to show that with skew-symmetric $A$
and $\sigma=\tanh$ it is possible to have \emph{bounded} solutions that 
do not converge to constant values, but keep moving into a limit cycle. 

While it is not possible to give a direct extension of Theorem~\ref{thm:stability-cont},
our approach will be that of extending the idea explained in Remark~\ref{rem:energy-cons},
that is using a motion invariant to show that the system makes closed orbits in 
state space. This argument, that is an energy-conservation argument, is 
closely related to the classical notion of Lyapunov stability~\cite{khalil2002nonlinear},
as a Lyapunov function can be regarded as a generalized energy function.\footnote{Remember 
that Lyapunov stability means that if we start close from an equilibrium point, 
we always remain close to that point.}
First of all, let us recall this classical result:

\begin{theorem}[Lyapunov Stability Theorem]
\label{thm:lyapunov-stability}
Consider a nonlinear dynamical system $\dot{x} = f(x)$ with $f(0)=0$. Let $H: \mathbb{R}^n \rightarrow \mathbb{R}$ be a continuously differentiable function satisfying the following conditions: 
\begin{enumerate}[label=(\roman*)]
\item $H(0) = 0$
\item $H(x) > 0$ for all $x\in\R^n\setminus\{0\}$;
\item $\Bigl(\nabla H(x), f(x) \Bigr) \leq 0$ for all $x\in\R^n$.
\end{enumerate}
Then, the zero solution $x(t)=0$ is Lyapunov stable.
\end{theorem}
Notice that the assumption $f(0)=0$ holds naturally for any odd activation function $\sigma$,
hence also for the hyperbolic tangent. The condition $\Bigl(\nabla H(x), f(x) \Bigr) \leq 0$ in Theorem~\ref{thm:lyapunov-stability},
is related to the existence of \emph{invariants of the motion} (or \emph{constants of the motion}). Indeed, given a solution $t\mapsto x(t)$ of Eq.~\eqref{eq:ctrnn-no-input},
$H: \mathbb{R}^n \rightarrow \mathbb{R}$ is an invariant of the motion if
\begin{equation}
\label{eq:lyapunov-inv}
\frac{d}{dt} H(x(t)) = \Bigl(\nabla H(x(t)), \sigma(Ax(t)) \Bigr) = 0, 
\end{equation}
where $t\mapsto x(t)\in\R^n$ is a solution to Eq.~\eqref{eq:ctrnn-no-input}.

\begin{remark}[Linear RNN with skew-symmetric matrix]
Following from Remark~\ref{rem:energy-cons},
let us consider again the linear system in Eq.~\eqref{eq:ctrnn-linear} with skew-symmetric matrix $A \in \mathbb{R}^{n \times n}$. If we define
\begin{equation}
    H(x) := \frac{1}{2} \|x\|^2,\quad \forall x\in\R^n,
\end{equation}
we can promptly see from Eq.~\eqref{eq:energy-cons} (since $A$ is skew-symmetric and Proposition~\ref{prop:xAx-skew} holds) that
\begin{equation}
\begin{aligned}
    \frac{d}{dt}H(x(t)) &= \Bigl( \nabla H(x(t)), Ax(t)\Bigr)=0
\end{aligned}
\end{equation}
This also means, in view of Theorem~\ref{thm:lyapunov-stability}, that $H$ is a Lyapunov
function of the system and that $x(t)=0$ is Lyapunov stable.
\end{remark}
Lyapunov stable points that are not asymptotically Lyapunov stable are indeed marginally 
stable points.
Our strategy now is then to show that even in the nonlinear case, with $\sigma=\tanh$, it is possible to find an invariant of the motion.

\subsection{Stability of nonlinear RNNs}
Consider the nonlinear activation function $\sigma(a) = \tanh(a)$, which satisfies the conditions on $\sigma$ of assumption ($ii$) in the previous subsection. Then we have the following result.

\begin{wrapfigure}{r}{0.3\textwidth}  
  \centering
  \includegraphics[width=0.28\textwidth]{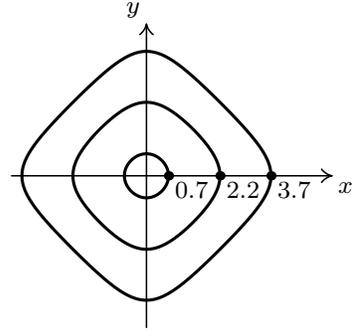}
  \caption{\label{fig:level-sets} 
  Level sets $\{(x,y)\in\mathbb{R}^2:\log(\cosh(x)\cosh(y))=\ell\}$ of Eq.~\eqref{eq:energy-2d} with $\omega=1$ for $\ell=0.2,1.5,3$.}
\end{wrapfigure}

\begin{proposition}\label{prop:2d-inv}
Let $n=2$ and suppose that $A$ is the generic skew-symmetric matrix $A=\bigl({0\atop \omega}{-\omega\atop 0}\bigr)$. Then, the function
$H\colon\R^2\to[0,+\infty)$ defined as
\begin{equation}\label{eq:energy-2d}
H(x):=\frac{1}{\omega}\log\Bigl(\cosh(\omega x_1) \cosh(\omega x_2)\Bigr),
\quad \forall x\in\R^2,
\end{equation}
is an invariant of the motion defined by Eq.~\eqref{eq:ctrnn-no-input}.
\end{proposition}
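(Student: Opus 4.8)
The plan is to verify invariance directly: compute $\nabla H$ explicitly, substitute the dynamics $\dot x = \sigmavec(Ax)$, and show the resulting expression for $\tfrac{d}{dt}H(x(t))$ vanishes, using nothing more than the oddness of $\tanh$ and the skew-symmetry of $A$ already exploited in Remark~\ref{rem:energy-cons}.

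First I would compute the gradient. Since $\tfrac{d}{da}\log\cosh(a) = \tanh(a)$, the partial derivatives of $H$ in Eq.~\eqref{eq:energy-2d} are $\partial_{x_i}H(x) = \tanh(\omega x_i)$ for $i=1,2$, so $\nabla H(x) = \bigl(\tanh(\omega x_1),\,\tanh(\omega x_2)\bigr)$. Next comes the one structural observation worth isolating: for this particular $A$ one has the identity
\begin{equation}
\sigmavec(Ax) = \tfrac{1}{\omega}\,A\,\nabla H(x).
\end{equation}
Indeed $Ax = (-\omega x_2,\ \omega x_1)$, so applying $\tanh$ componentwise and using that $\tanh$ is odd gives $\sigmavec(Ax) = \bigl(-\tanh(\omega x_2),\ \tanh(\omega x_1)\bigr)$, which is exactly $\tfrac{1}{\omega}A\bigl(\tanh(\omega x_1),\tanh(\omega x_2)\bigr)$. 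The point is that each row of the $2\times 2$ skew-symmetric matrix has a single nonzero entry, so the componentwise nonlinearity commutes with $A$ up to the odd symmetry of $\tanh$.

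With these two facts in hand, the chain rule finishes the argument: along any solution $t\mapsto x(t)$ of Eq.~\eqref{eq:ctrnn-no-input},
\begin{equation}
\frac{d}{dt}H(x(t)) = \Bigl(\nabla H(x(t)),\ \dot x(t)\Bigr) = \Bigl(\nabla H(x(t)),\ \sigmavec(Ax(t))\Bigr) = \tfrac{1}{\omega}\Bigl(\nabla H(x(t)),\ A\,\nabla H(x(t))\Bigr) = 0,
\end{equation}
where the last equality holds because $\nabla H(x(t))\in\R^2$ and $A$ is skew-symmetric, so the real inner product $(v,Av)$ vanishes for every $v$ by Remark~\ref{rem:energy-cons} (equivalently Proposition~\ref{prop:xAx-skew}). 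Hence $H(x(t))$ is constant, i.e.\ $H$ is an invariant of the motion.

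There is no serious obstacle here; the only point requiring a little care is the identity $\sigmavec(Ax) = \tfrac{1}{\omega}A\nabla H(x)$, which genuinely uses both the oddness of $\tanh$ and the special ``one nonzero entry per row'' structure of the $2\times 2$ skew-symmetric $A$, and which is exactly what will fail to generalize verbatim to larger $n$ (where $A$ mixes several coordinates per row and a $\log\cosh$-type ansatz no longer has its gradient aligned with $A$ in this way). As an even more elementary alternative, one can skip the identity entirely and simply substitute $\dot x_1 = -\tanh(\omega x_2)$, $\dot x_2 = \tanh(\omega x_1)$ into $\tfrac{d}{dt}H(x(t)) = \tanh(\omega x_1)\dot x_1 + \tanh(\omega x_2)\dot x_2$ and observe that the two terms cancel.
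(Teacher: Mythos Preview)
Your proof is correct. The paper's own proof is precisely your ``elementary alternative'' at the end: it writes out $\nabla H(x(t)) = (\tanh(\omega x_1),\tanh(\omega x_2))$ and $\dot x(t) = (-\tanh(\omega x_2),\tanh(\omega x_1))$ and observes that their inner product is zero by direct cancellation of the two terms.

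Your main route adds one structural layer the paper does not make explicit: the identity $\sigmavec(Ax) = \tfrac{1}{\omega}A\nabla H(x)$, which lets you invoke $(v,Av)=0$ for skew-symmetric $A$ rather than cancelling by hand. This is a nice conceptual repackaging because it makes transparent \emph{why} the $\log\cosh$ ansatz works here --- it turns the nonlinear flow into a linear skew-symmetric flow in the variable $\nabla H$ --- and it ties the nonlinear case back to Remark~\ref{rem:energy-cons} cleanly. Your caveat about this identity relying on the ``one nonzero entry per row'' structure is also well placed: it is exactly the reason the paper restricts to block-diagonal $A$ in the higher-dimensional Proposition, where the same mechanism applies blockwise.
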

\begin{proof}
Suppose $t\mapsto x(t)\in\R^2$ is a classical solution of 
$\dot x(t)=\sigmavec(Ax(t))$. Then, 
\begin{equation}
\begin{aligned}
\frac{d}{dt} H(x(t))&=\Bigl(\nabla H(x(t)), \dot x(t)\Bigr)\\
&=\biggl(
\begin{pmatrix}
\tanh(\omega x_1(t))\\
\tanh(\omega x_2(t))
\end{pmatrix}, 
\begin{pmatrix}
-\tanh(\omega x_2(t))\\
\tanh(\omega x_1(t))
\end{pmatrix}
\biggr) \\
&=0.
\end{aligned}
\end{equation}
Hence $H$ is a constant of the motion.
\end{proof}
\medskip
This result, as in the linear case, is sufficient to prove that the orbits of the system in 
the state space are closed:

\begin{corollary}\label{cor:sublevels}
Under the same hypothesis of Proposition~\ref{prop:2d-inv}, we have that the curves 
$t\mapsto x(t)$ that solve Eq.~\eqref{eq:ctrnn-no-input} are closed smooth, without auto-intersections.
\end{corollary}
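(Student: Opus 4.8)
The plan is to combine the motion invariant from Proposition~\ref{prop:2d-inv} with an analysis of the regularity of its level sets. As a preliminary, note that since $\sigma=\tanh$ is globally Lipschitz and bounded, the field $x\mapsto\sigmavec(Ax)$ is Lipschitz and bounded; hence for every $\bar x$ the solution of Eq.~\eqref{eq:ctrnn-no-input} exists, is unique, and is defined for all $t\in\R$. By Proposition~\ref{prop:2d-inv} the number $\ell:=H(\bar x)$ is preserved along the trajectory, so $x(t)$ lies in the level set $\Gamma_\ell:=\{x\in\R^2: H(x)=\ell\}$ for every $t$. If $\bar x=0$ then $\ell=0$ and $x(t)\equiv 0$, a degenerate (single-point) closed curve, so from now on I would assume $\bar x\neq0$, which forces $\ell>0$ because $H$ vanishes only at the origin.

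Next I would study $\Gamma_\ell$ as a subset of $\R^2$. The function $H$ is smooth with $\nabla H(x)=(\tanh(\omega x_1),\tanh(\omega x_2))$, which vanishes only at $x=0$; hence every $\ell>0$ is a regular value, and by the regular value theorem $\Gamma_\ell$ is a smooth embedded $1$-manifold. Moreover each summand $t\mapsto\frac1\omega\log\cosh(\omega t)$ of $H$ is convex (its second derivative equals $\omega(1-\tanh^2(\omega t))>0$) and coercive, so $H$ is convex and $H(x)\to+\infty$ as $\|x\|\to\infty$. Therefore $\{H\le\ell\}$ is compact and convex with the origin in its interior, and $\Gamma_\ell$ is exactly its boundary: a single smooth Jordan curve, i.e. a submanifold diffeomorphic to $S^1$. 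I expect this identification of the level sets as embedded circles to be the delicate step, since it requires the regular value theorem for the manifold structure together with the convexity and coercivity of $H$ to guarantee compactness and connectedness (in particular, to rule out additional components).

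Finally I would transport the dynamics onto $\Gamma_\ell$. The velocity $f(x)=\sigmavec(Ax)=(-\tanh(\omega x_2),\tanh(\omega x_1))$ satisfies $(\nabla H(x),f(x))=0$ by the computation in the proof of Proposition~\ref{prop:2d-inv}, hence $f(x)$ lies in the one-dimensional space $\ker\nabla H(x)=T_x\Gamma_\ell$; thus $f$ restricts to a smooth vector field on $\Gamma_\ell$. It is nowhere zero there, since $f(x)=0$ forces $x_1=x_2=0$, i.e. $x\notin\Gamma_\ell$. A smooth nowhere-vanishing vector field on a curve diffeomorphic to $S^1$ has velocity bounded away from zero (by compactness), so its flow carries every point once around the curve in finite time; consequently the orbit through $\bar x$ is periodic and coincides with $\Gamma_\ell$. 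Hence $t\mapsto x(t)$ traces out a closed smooth curve without auto-intersections, which is the claim; smoothness is inherited either from the smoothness of the flow or from that of the submanifold $\Gamma_\ell$.
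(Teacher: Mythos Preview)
Your argument is correct and follows the same core strategy as the paper: confine the trajectory to a level set $\{H=\ell\}$ via Proposition~\ref{prop:2d-inv}, then show that each such level set is a single smooth closed curve. Two differences are worth noting. First, to obtain connectedness of $\Gamma_\ell$ you exploit the convexity of $H$ (so that $\{H\le\ell\}$ is convex and its boundary is a Jordan curve), whereas the paper drops the logarithm, uses coercivity of $\cosh(\omega x_1)\cosh(\omega x_2)$ for compactness, and then appeals to the sublevel being simply connected; both routes work, and yours is slightly more self-contained. Second, and more substantively, you add a step the paper leaves implicit: you argue that the restricted field is nowhere vanishing on $\Gamma_\ell$, hence by compactness the flow is periodic and the orbit equals $\Gamma_\ell$. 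The paper's proof, read literally, only establishes that each level set is a single smooth loop and does not verify that the solution curve actually closes up rather than, say, accumulating on a proper subset; your final paragraph fills that gap.
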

\begin{proof}
Consider a generic level set $\{H=\ell\}$. 
Since the logarithm is a smooth, strictly increasing function, the sublevels of $H$ are the same as those of 
$f(x_1,x_2)=\cosh(\omega x_1) \cosh(\omega x_2)$ (we can discard the logarithm).
The function $f$ goes to $+\infty$ as $|(x_1, x_2)| \to +\infty$ ($f$
is coercive), then 
the sublevels $\{f\le \ell\}$ are compact, in particular bounded.
The boundary of  $\{f\le \ell\}$ is  $\{f=\ell\}$ and by the implicit function theorem 
applied to $f$ (that can be applied since $\nabla f\ne0$ on the level)
we have that it is a smooth 1D manifold.
This means that the level 
$\{H=\ell\}$ is made up of smooth closed curves (loops) that do not intersect each other or themselves. Moreover, since the sublevel relative to 
$\ell$ is simply connected, it is made up of a \emph{single} closed curve.
\end{proof}

Figure~\ref{fig:level-sets} shows several level sets of  $H$ . As the cut height increases ($\ell >$ 0 grows), the orbits gradually take on a more squared shape. These curves, derived theoretically, have also been observed experimentally (see Fig.~\ref{fig:fig1}).
We want to remark that the two dimensional case that we have just analyzed does not represent 
only an exercise, but rather a base step to tackle the problem at a generic dimension $n$.
We recall that  any real skew-symmetric matrix $A \in \mathbb{R}^{n \times n}$ with even dimension\footnote{Here we consider just even dimension, but the same result holds for 
odd dimension just by adding a zero to the diagonal of Eq.~\eqref{eq:block-diagonal}} 
$n=2d$ can be brought to a block-diagonal form by an orthogonal transformation, where the $d$ blocks are $2\times2$ skew-symmetric 
matrices of the form 
$\bigl({0\atop \omega}{-\omega\atop 0}\bigr)$ (see Corollary 2.5.11 in \cite[p.~136-137]{horn2012matrix}).
Such result in the linear setting suggests that the dynamical behavior of 
Eq.~\eqref{eq:ctrnn-linear} can be fully understood just by restricting to the following
class of skew-symmetric  matrices:
\begin{equation}\label{eq:block-diagonal}
A=\diag\left(
\begin{pmatrix}
0& -\omega_1\\
\omega_1&0
\end{pmatrix},\dots,
\begin{pmatrix}
0& -\omega_d\\
\omega_d&0
\end{pmatrix}
\right).
\end{equation}
This is not strictly true in the nonlinear setting, since we cannot easily change base in the 
ODE due to the nonlinearity. At any rate, this motivates us to study the $n$ dimensional case
restricted to matrices of the form described in Eq.~\eqref{eq:block-diagonal}.
Hence, the following result holds:

\begin{proposition}
Let  $n=2d$ and assume that the matrix $A$ is skew-symmetric 
and block diagonal of the form in Eq.~\eqref{eq:block-diagonal}.
Then, the function $H\colon\R^n\to[0,+\infty)$
defined as 
\begin{equation}\label{eq:motion-invariance-nd}
H(x):=\sum_{i=1}^d 
\frac{1}{\omega_i}\log\Bigl(\cosh(\omega_i x_{2i-1}) \cosh(\omega_i x_{2i})\Bigr),
\end{equation}
for all $x\in\R^{2d}$, is a constant of the motion~\eqref{eq:ctrnn-no-input}.
\end{proposition}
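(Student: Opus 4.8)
The plan is to differentiate $H$ along an arbitrary classical solution $t\mapsto x(t)$ of Eq.~\eqref{eq:ctrnn-no-input} and to exploit the fact that the block-diagonal structure of $A$ in Eq.~\eqref{eq:block-diagonal} decouples the $2d$-dimensional dynamics into $d$ independent planar systems, each of which is precisely the one already handled in Proposition~\ref{prop:2d-inv}. Concretely, $H=\sum_{i=1}^d H_i$ where $H_i(x)=\frac{1}{\omega_i}\log(\cosh(\omega_i x_{2i-1})\cosh(\omega_i x_{2i}))$ depends only on the $i$-th coordinate pair, so it suffices to show that each $H_i$ is conserved by the $i$-th subsystem.

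The steps I would carry out are: (1) read off the componentwise equations. Since $A$ is block diagonal, for each $i$ the pair $(x_{2i-1},x_{2i})$ obeys $\dot x_{2i-1}=\sigma(-\omega_i x_{2i})$ and $\dot x_{2i}=\sigma(\omega_i x_{2i-1})$, and the oddness of $\sigma=\tanh$ turns this into $\dot x_{2i-1}=-\tanh(\omega_i x_{2i})$, $\dot x_{2i}=\tanh(\omega_i x_{2i-1})$; in particular no other coordinate enters, so the flow is a product of $d$ planar flows. (2) Compute $\nabla H$: because each summand involves a single coordinate pair, $\partial H/\partial x_{2i-1}=\tanh(\omega_i x_{2i-1})$ and $\partial H/\partial x_{2i}=\tanh(\omega_i x_{2i})$, using $\frac{d}{da}\log\cosh a=\tanh a$ and the cancellation of the factor $1/\omega_i$ against the chain-rule factor $\omega_i$; this is valid on all of $\R^{2d}$ since $\cosh$ is strictly positive, so $H$ is globally smooth. (3) Take the inner product: $\frac{d}{dt}H(x(t))=(\nabla H(x(t)),\dot x(t))=\sum_{i=1}^d\bigl[\tanh(\omega_i x_{2i-1})(-\tanh(\omega_i x_{2i}))+\tanh(\omega_i x_{2i})\tanh(\omega_i x_{2i-1})\bigr]$, and every summand vanishes identically, whence the derivative is zero and $H$ is a constant of the motion.

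I do not expect a genuine obstacle here: the argument is the same pairwise cancellation as in the two-dimensional case, now carried out block by block and summed. The only point that deserves an explicit remark is that block-diagonality of $A$ truly decouples the \emph{nonlinear} system coordinate-pair by coordinate-pair — this is exactly where the restriction to matrices of the form Eq.~\eqref{eq:block-diagonal}, rather than a general skew-symmetric $A$, is needed, since the nonlinearity prevents one from conjugating an arbitrary skew-symmetric matrix into this normal form inside the ODE. Once that observation is made, the statement is an immediate bookkeeping extension of Proposition~\ref{prop:2d-inv}, and one could even phrase the proof as ``apply Proposition~\ref{prop:2d-inv} to each block and add.''
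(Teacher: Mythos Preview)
Your proposal is correct and follows essentially the same approach as the paper: compute $\nabla H$ componentwise, write out the vector field $\sigmavec(Ax)$ componentwise using the block structure, and observe that the inner product vanishes block by block. The only difference is cosmetic---the paper carries out the same computation with Kronecker deltas over all indices rather than organizing it as ``apply Proposition~\ref{prop:2d-inv} to each block and add,'' but the underlying cancellation is identical.
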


\begin{proof}
For $k=1\dots n$ and for all $x\in\R^n$, the gradient of Eq.~\eqref{eq:motion-invariance-nd} is
\[
\begin{aligned}
\frac{\partial}{\partial x_k} H(x)
&=\sum_{i=1}^d \Biggl[
\frac{\sinh(\omega_i x_{2i-1}) \cosh(\omega_i x_{2i})\delta_{k,2i-1}
}{\cosh(\omega_i x_{2i-1}) \cosh(\omega_i x_{2i})}+frac{\cosh(\omega_i x_{2i-1}) \sinh(\omega_i x_{2i})\delta_{k,2i}
}{\cosh(\omega_i x_{2i-1}) \cosh(\omega_i x_{2i})}\Biggr]\\
&= \sum_{i=1}^d\Bigl(\tanh(\omega_i x_{2i-1}) \delta_{k,2i-1}
+ \tanh(\omega_i x_{2i}) \delta_{k,2i}\Bigr).\\
\end{aligned}
\]
Moreover, the system $\dot x(t)=\sigmavec(Ax(t))$  written by components becomes, for $k=1 \dots n$,
\[
\dot x_k(t)\!=\!\sum_{i=1}^d \Bigl(\tanh(-\omega_i x_{2i}(t)) \delta_{k,2i-1}
+ \tanh(\omega_i x_{2i-1}(t)) \delta_{k,2i}\Bigr).
\]
Now, if  $t\mapsto x(t)\in\R^{2d}$ is a classical solution of 
$\dot x(t)=\sigmavec(Ax(t))$, then $d H(x(t))/dt$ becomes:
\[
\begin{aligned}
\sum_{k=1}^n
\sum_{i,j=1}^d&\Bigl(\tanh(\omega_i x_{2i-1}(t)) \delta_{k,2i-1}
+ \tanh(\omega_i x_{2i}(t)) \delta_{k,2i}\Bigr)\Bigl(\tanh(-\omega_j x_{2j}(t)) \delta_{k,2j-1}
+ \tanh(\omega_j x_{2j-1}(t)) \delta_{k,2j}\Bigr).
\end{aligned}
\]
In the above expression all the terms of the form 
$\delta_{k,2i-1}\delta_{k,2j}$ or $\delta_{k,2i}\delta_{k,2j-1}$
are $0$, since for no integers $i$ and $j$ it can happen that $2i-1=2j$
or $2j=2i-1$. So what is left is
\begin{equation}
\begin{aligned}
& \sum_{i,j=1}^d \tanh(\omega_i x_{2i-1}(t)) 
\tanh(-\omega_j x_{2j}(t))\delta_{2i-1,2j-1}+ \tanh(\omega_i x_{2i}(t)) 
\tanh(\omega_j x_{2j-1}(t))\delta_{2i,2j}\\
=&\sum_{i=1}^d -\tanh(\omega_i x_{2i-1}(t)) 
\tanh(\omega_i x_{2i}(t))+\tanh(\omega_i x_{2i}(t)) 
\tanh(\omega_i x_{2i-1}(t)) = 0.
\end{aligned}
\end{equation}

\end{proof}
This result, that is a generalization of the  $n=2$ case, shows through Corollary~\ref{cor:sublevels} that also for generic dimension $n$ the hyperbolic 
tangent with skew-symmetric matrix $A$ gives rise to perpetual oscillatory behaviors.

\section{Numerical Simulations}\label{sec:exp}
 In this section we describe the experimental campaign we conducted to provide a more tangible proof of how RNNs with skew-symmetric weight matrices can actually oscillate, examining the role of nonlinear activation functions as well as the generation capabilities at higher hidden dimensions.

 \medskip
\begin{figure*}[t]
    \centering
    \begin{minipage}{0.9\textwidth}
    \begin{minipage}{0.89\textwidth}  
        \centering
                \includegraphics[width=0.32\linewidth,trim={1.2cm 0.4cm 1.2cm 0.4cm},clip]{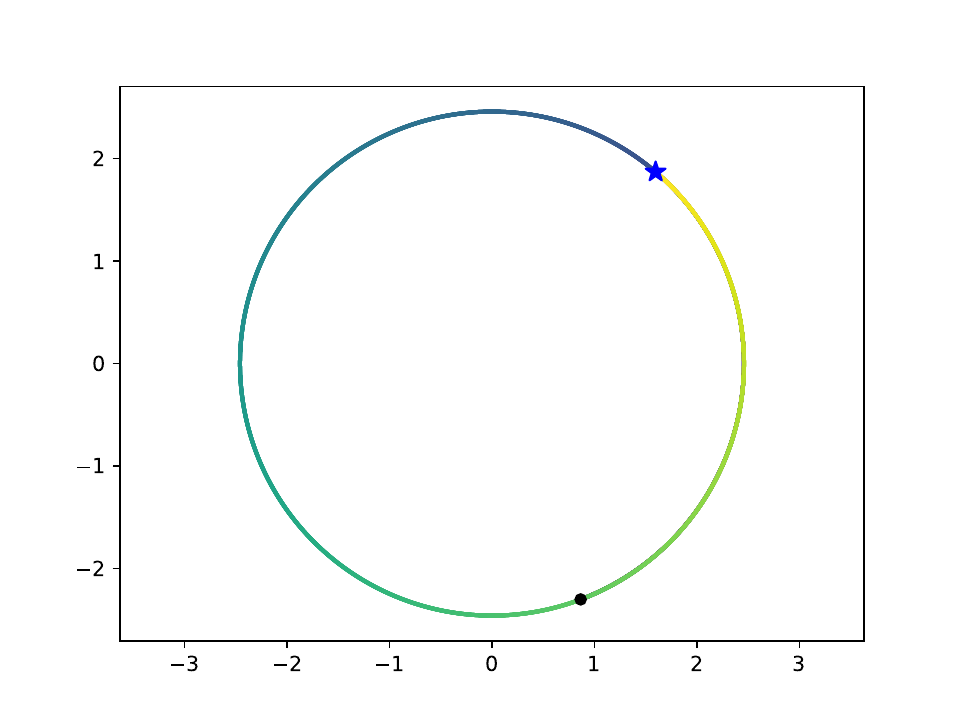}
                \includegraphics[width=0.32\linewidth,trim={1.2cm 0.4cm 1.2cm 0.4cm},clip]{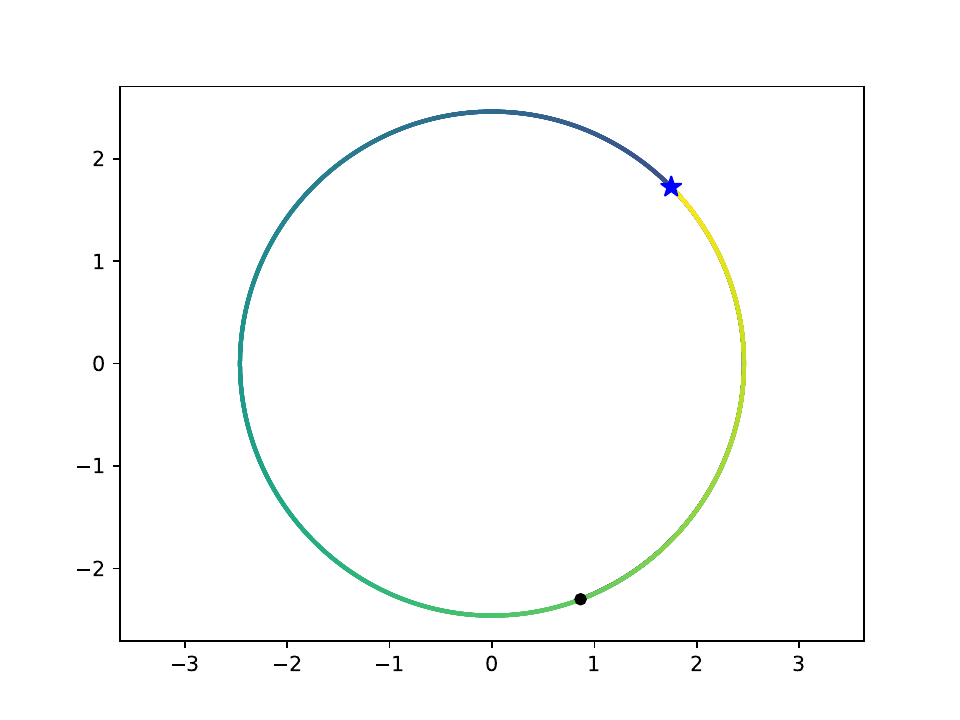}
                \includegraphics[width=0.32\linewidth,trim={1.2cm 0.4cm 1.2cm 0.4cm},clip]{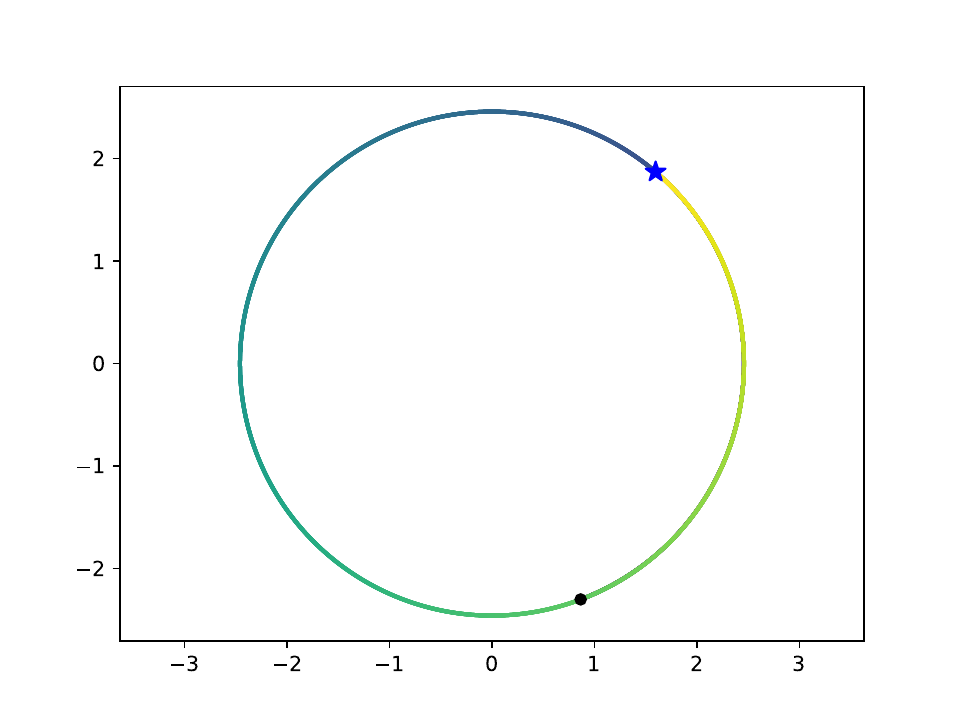}\\
                \includegraphics[width=0.32\linewidth,trim={1.2cm 0.4cm 1.2cm 0.4cm},clip]{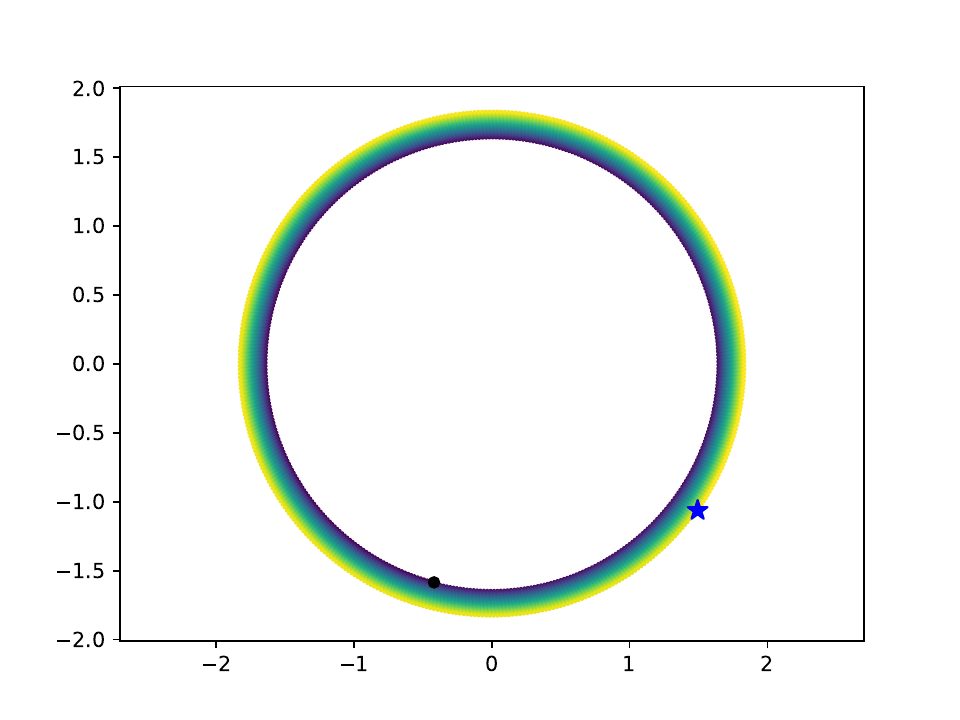}
                \includegraphics[width=0.32\linewidth,trim={1.2cm 0.4cm 1.2cm 0.4cm},clip]{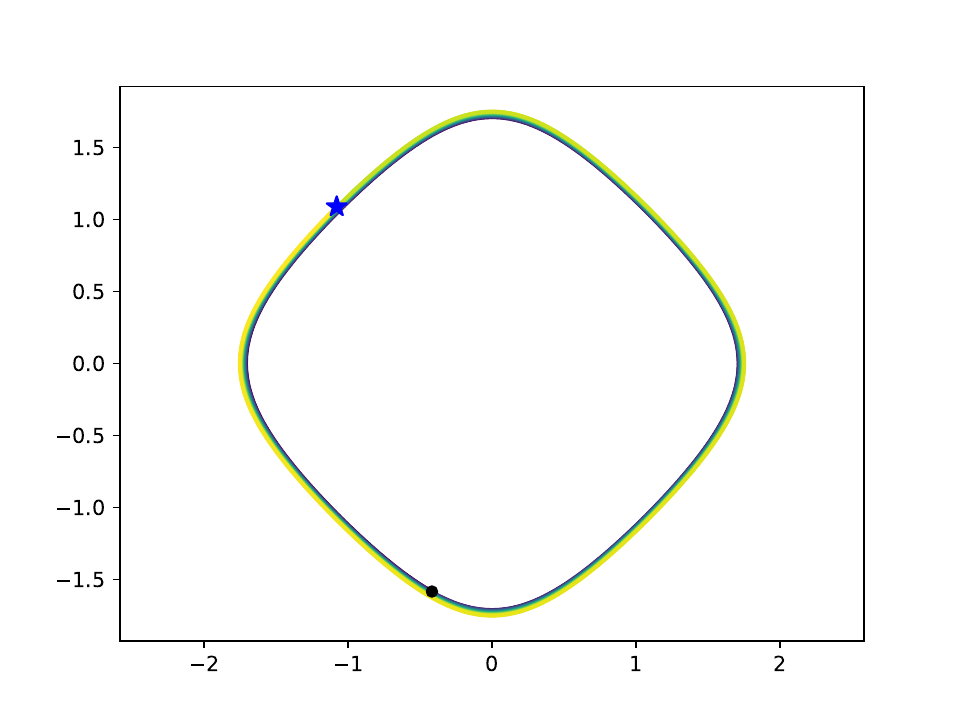}
                \includegraphics[width=0.32\linewidth,trim={1.2cm 0.4cm 1.2cm 0.4cm},clip]{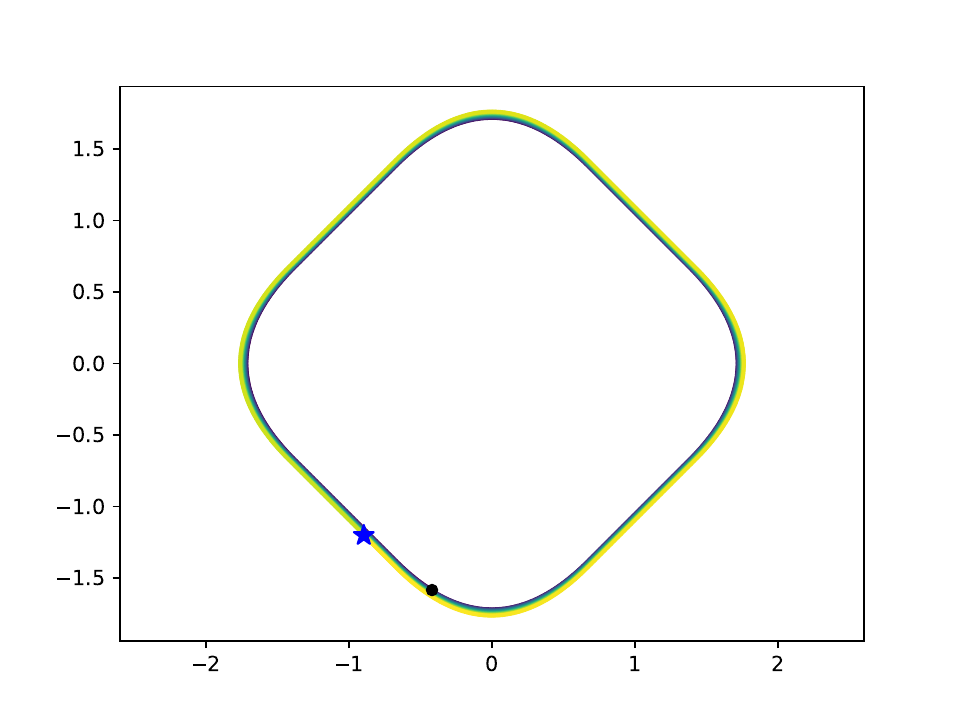}\\
                \includegraphics[width=0.32\linewidth,trim={1.2cm 0.4cm 1.2cm 0.4cm},clip]{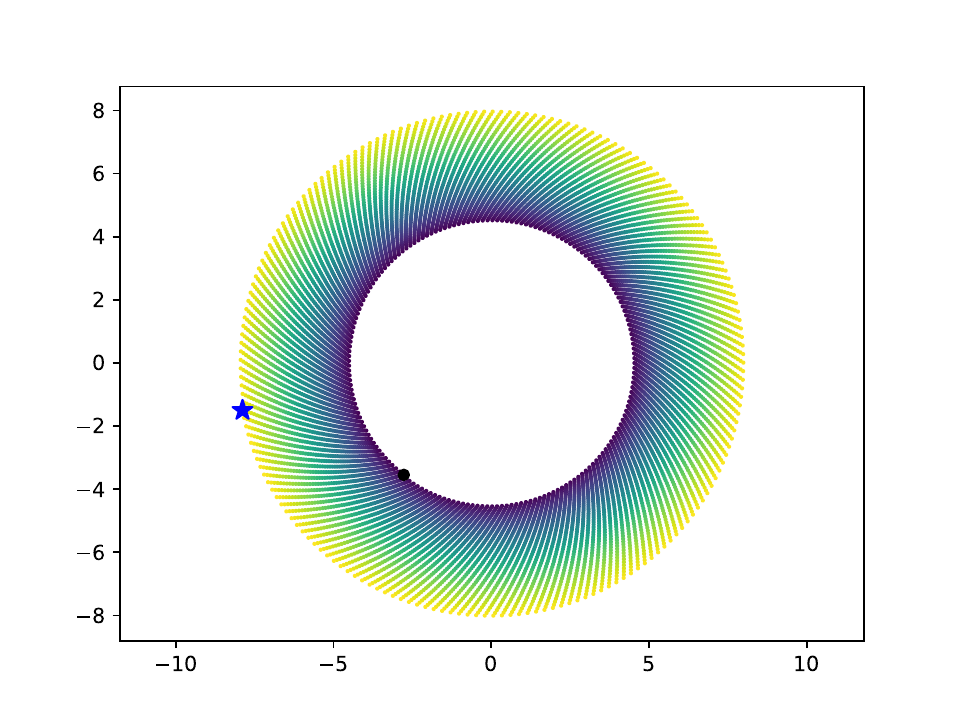}
                \includegraphics[width=0.32\linewidth,trim={1.2cm 0.4cm 1.2cm 0.4cm},clip]{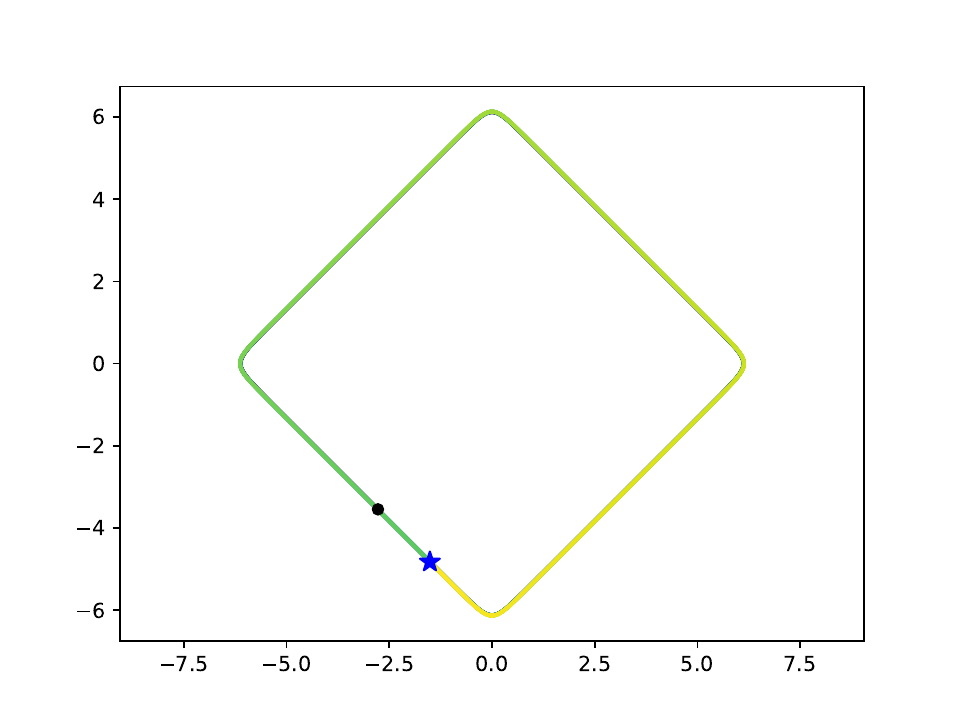}
                \includegraphics[width=0.32\linewidth,trim={1.2cm 0.4cm 1.2cm 0.4cm},clip]{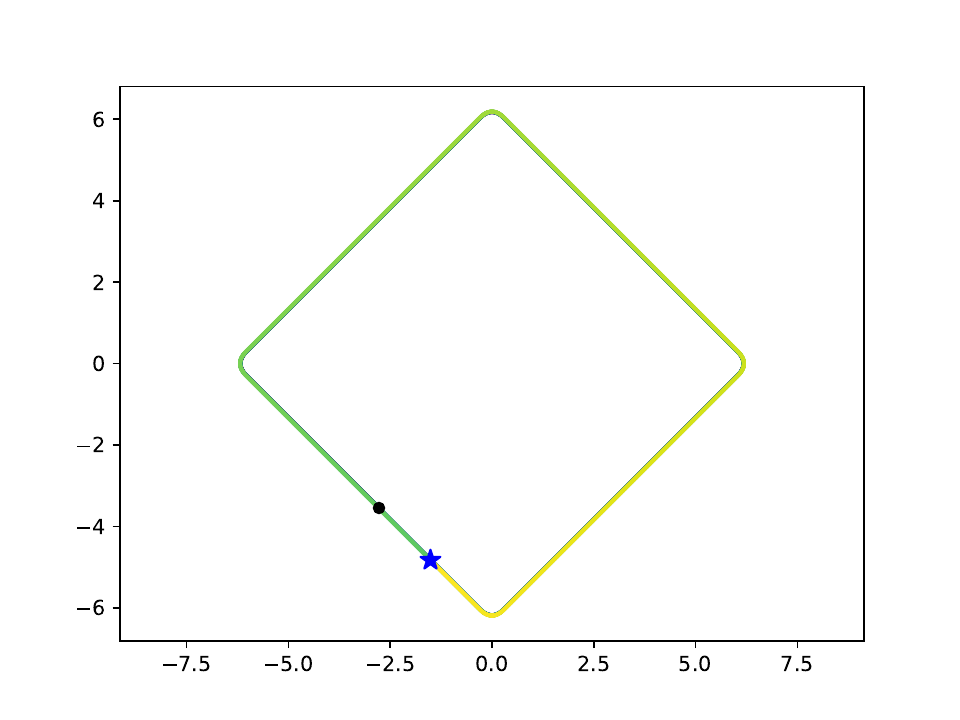}\\
    \end{minipage}
    \hfill
    \begin{minipage}{0.1\textwidth}
        \centering
            \raisebox{1.8cm}{\includegraphics[width=\linewidth,trim={0 16cm 0 0},clip]{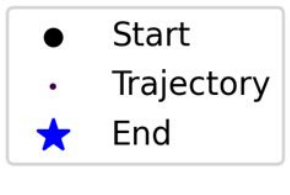}}\\
            \raisebox{3.0cm}{\includegraphics[width=\linewidth,trim={0 0 0 3cm},clip]{figs/barplot_legend.pdf}}
    \end{minipage}
    \end{minipage}
    
    \caption{State-space (2D). The different limit cycles associated with different activation functions. On the \emph{left} column the linear case; in the \emph{center} the tanh; on the \emph{right} the hard-tanh. Each axis is a component of the state, the black dot and the blue star are the initial and final state, respectively. The state trajectory is enriched by a color gradient to suggest the evolution of the system through time, with violet points representing the state during the first steps of the simulation, and the yellow being the last ones, as also suggested by the bar on the right of the grid of images.}
    \label{fig:fig1}
\end{figure*}

\parafango{2-D limit cycles}
 Here  we primarily focus on the two dimensional case of Eq.~\eqref{eq:ctrnn-no-input}, for providing a visual insight of the oscillatory behaviors that emerge from the exploitation of skew-symmetric matrices related to limit cycles in the state space. Additionally, we examine the influence of nonlinear activation functions as illustrated in Figure~\ref{fig:fig1}. More specifically, our experiments were conducted by randomly initializing the skew-symmetric matrix $A$ from a normal distribution $\mathcal{N}\left(0,\tilde{w}\right)$; similarly, the state was sampled from the normal distribution $\mathcal{N}\left(0,\tilde{x}\right)$. We systematically varied both the random seed and the standard deviations, choosing $\tilde{w} \in \left\{0.1, 1, 10\right\}$ and $\tilde{x} \in \left\{1, 10\right\}$, exploiting also high values of the variance to promote the emergence of nonlinear behaviors in both the quasi-linear and saturated regimes of non-linear activation functions. The state was then evolved integrating Eq.~\eqref{eq:ctrnn-no-input} with the forward Euler method in Eq.~\ref{eq:ctrnn-euler} for $100000$ iterations, with a discretization step $\tau=0.001$s. To avoid diverging behaviors, all the simulations where prematurely interrupted if we met the condition $\|x\|_2 \geq 100$, reporting the last value of the state before stopping. The results of our analysis are summarized in Figure~\ref{fig:fig1}, where we compare three different scenarios: the linear case in the left column, the case with the hyperbolic tangent (tanh) activation function in the center column, and its rectified piece-wise linear counterpart (hardtanh), as defined in Eq.~\eqref{eq:hardtanh}, in the right column. In each row the experiments were conducted starting from the same initial state and with same matrix $A$, just varying the activation function. The figures were selected to provide the reader a straightforward visualization of the impact of odd non-linear activation functions, from the case in which they operate in their linear regime (first row) to cases of near-complete saturation (third row). In the first row, the neuron activation $Ax$ remains within the linear region of the activation functions across all cases. This is evident from the identical behavior of the non-linear and linear scenarios. The second row highlights subtle differences between the tanh (center) and the hardtanh (right), where the piecewise linearity of the latter preserves key features of the linear case while still introducing non-linear dynamics. The third row illustrates an extreme saturation regime, where both the tanh and its rectified variant yield the same state trajectory. Notably, the central column empirically confirms previously established theoretical results regarding the non-linear case with a tanh activation function. The reported curves precisely match those described by Eq.~\eqref{eq:energy-2d} and sketched in Figure~\ref{fig:level-sets}. These findings indicate that employing odd non-linear activation functions does not suppress the inherent oscillatory behaviors observed in Linear Recurrent Neural Networks with skew-symmetric weight matrix. Although the related limit cycles are no longer circular, they persist in the state space. Furthermore, non-linear activation functions appear to mitigate the well-documented instability of the forward Euler method \cite{chang2018antisymmetricrnn}. This stabilization effect is particularly evident in the second and third rows of Figure~\ref{fig:fig1}, where the unbounded growth observed in the linear case is significantly reduced. This suggests that such functions enhance the numerical stability of dynamical system integration. In practice, odd non-linear functions preserve limit cycles in the state space observed in the linear case with skew-symmetric matrix $A$, empirically confirming our thorough analysis conducted in Section~\ref{sec:non-linear-ctrnns}. To further investigate this aspect, we tested other well-known non-odd functions, specifically sigmoid and ReLU, under the same experimental setup. The results are shown in Figure~\ref{fig:fig2}, where the linear reference on the left is compared with the other two cases. Unlike odd functions, sigmoid (center) and ReLU (right) disrupt the closed orbits characteristic of the linear system, leading to divergence. The ReLU case even met the stopping condition, visually indicated a red cross to signal its final state. This divergence can be attributed to the fact that both functions are non-negative, explaining the diverging behaviors exhibited from the non-linear dynamical system.

\begin{figure*}[h!]
\centering
\begin{minipage}{0.7\textwidth}
\includegraphics[width=0.32\linewidth,trim={1.2cm 0.4cm 1.2cm 0.4cm},clip]{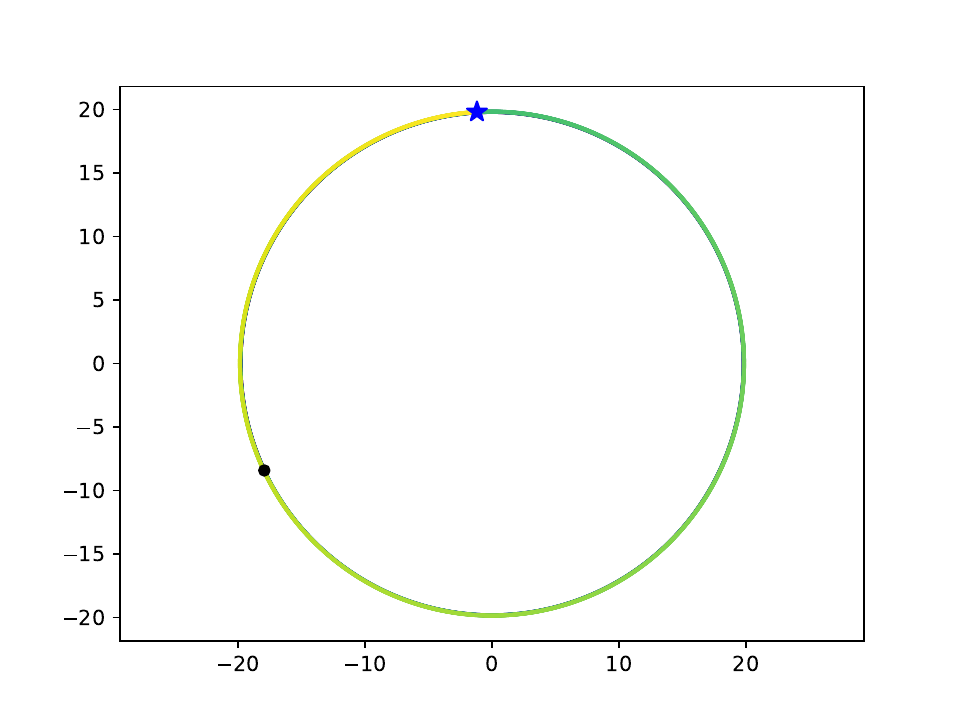}
\includegraphics[width=0.32\linewidth,trim={1.2cm 0.4cm 1.2cm 0.4cm},clip]{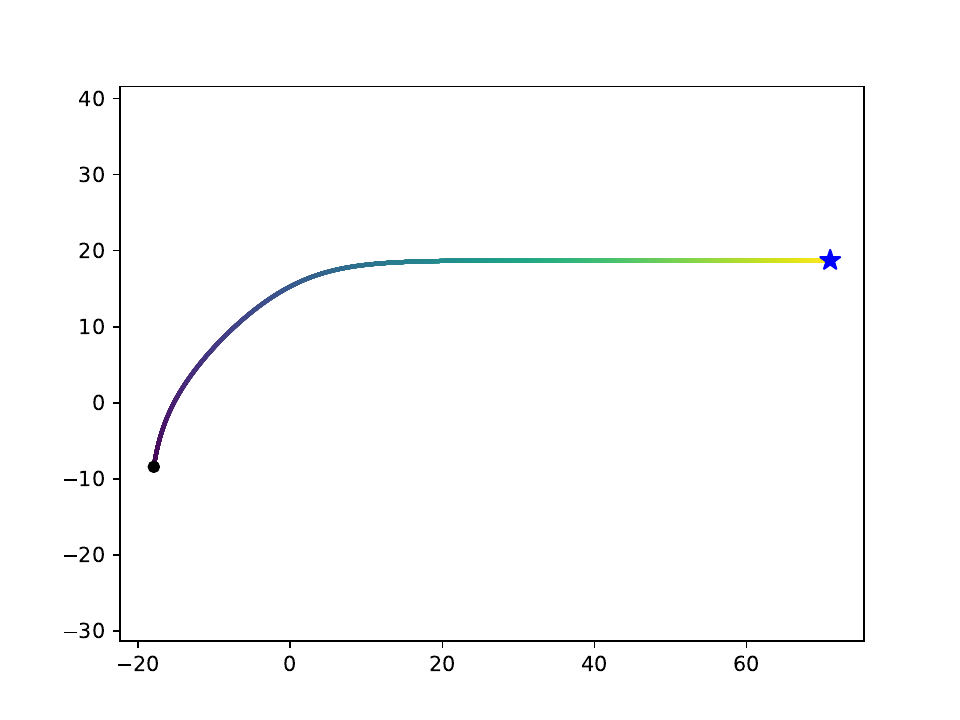}
\includegraphics[width=0.32\linewidth,trim={1.2cm 0.4cm 1.2cm 0.4cm},clip]{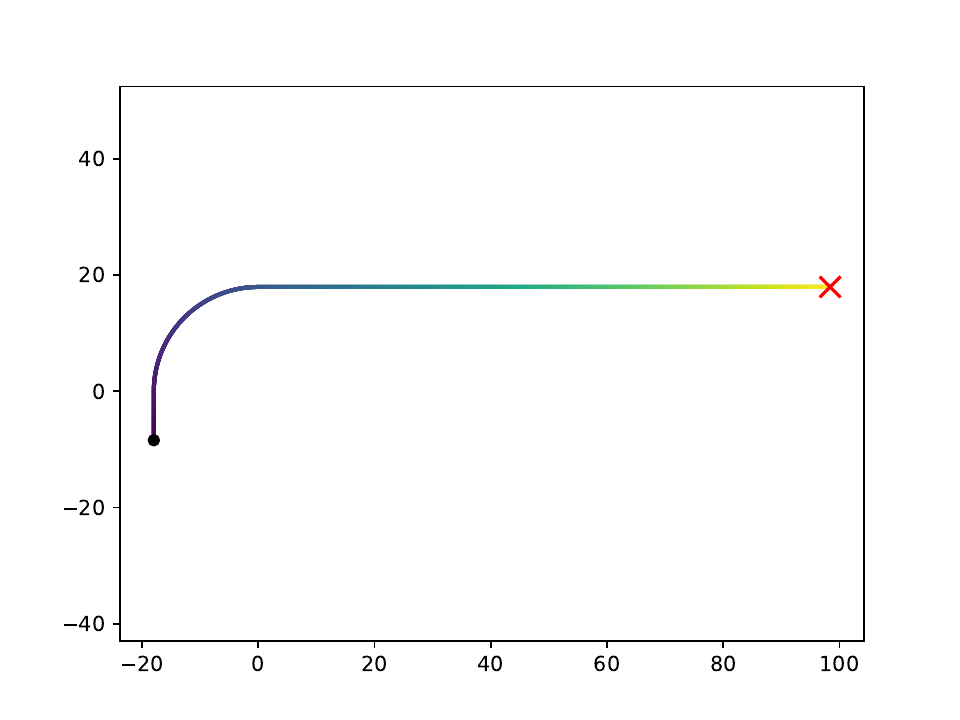}
\end{minipage}
\begin{minipage}{0.1\textwidth} 
\includegraphics[width=0.47\linewidth,trim={0 0 0 3cm},clip]{figs/barplot_legend.pdf}
\end{minipage}
\\
\includegraphics[width=0.27\linewidth,trim={0 4.5cm 0 0},clip]{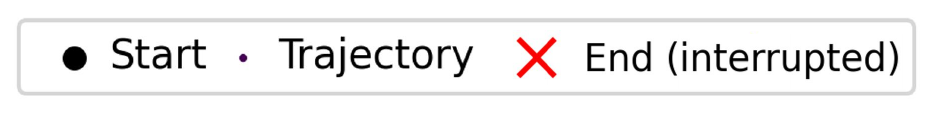}
\caption{State-space (2D). The role of non-odd activation functions like the sigmoid (\emph{center}) and the ReLU (\emph{right}). In the last case the simulation was interrupted before its end because the norm of the state exceeded the threshold, with the red cross marking the value of the state right before the interruption.}
\label{fig:fig2}
\end{figure*}

\parafango{2-D frequency analysis}
The analysis of the frequency spectrum of the 2-D case reported in Figure~\ref{fig:Three-nonlin-spectrum} illustrates the different role of linear, tanh, and hardtanh activation functions. The figure shows a logarithmic diagram of the amplitude response of two dimensional skew-symmetric RNNs with selected frequencies $\omega\in\left\{5\text{Hz},10\text{Hz},15\text{Hz},20\text{Hz},25\text{Hz}\right\}$, characterized by different colors in each graph. In the linear case on the left, selecting the frequency directly translates to an almost perfect line spectrum, as expected. The second diagram from the left tells us a completely different story when employing the tanh activation function, leading to similar responses for all the selected frequencies. This suggests that, in the nonlinear case, the relation between the weights of the RNN and its frequency profile is non-trivial, thus making it difficult to control its modes via their parameters. The last two images show that the piece-wise linear hardtanh response sits in the middle between the last two cases. Interestingly, a multiplicative increment factor $w_m$ of the weights helps the emergence of linear-like behaviors in the obtained spectrum. More specifically, we can observe that increasing the value from $w_m=10$ (third plot) to $w_m=25$ (fourth) enlarges the spectral band in which the system behaves linearly. Finally, the case $w_m=50$ is not shown in the figure since it completely yield the same linear case of the leftmost plot.

\begin{figure*}[htbp]
    \centering
        \includegraphics[width=0.24\linewidth,trim={0.8cm 0.6cm 0.cm 0.cm},clip]{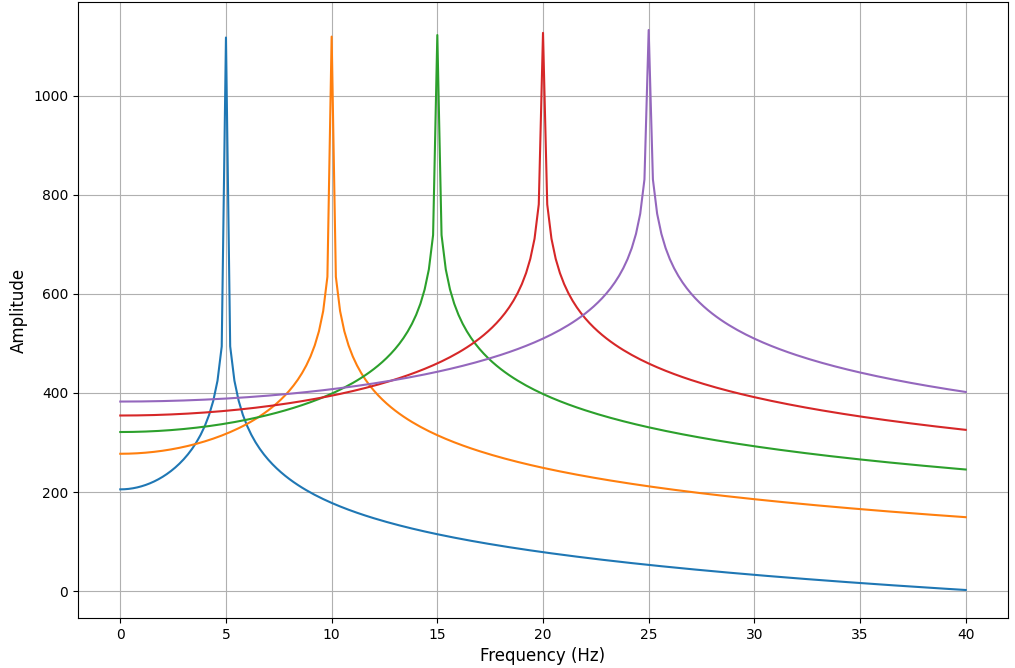}
        \includegraphics[width=0.24\linewidth,trim={0.8cm 0.6cm 0.cm 0.cm},clip]{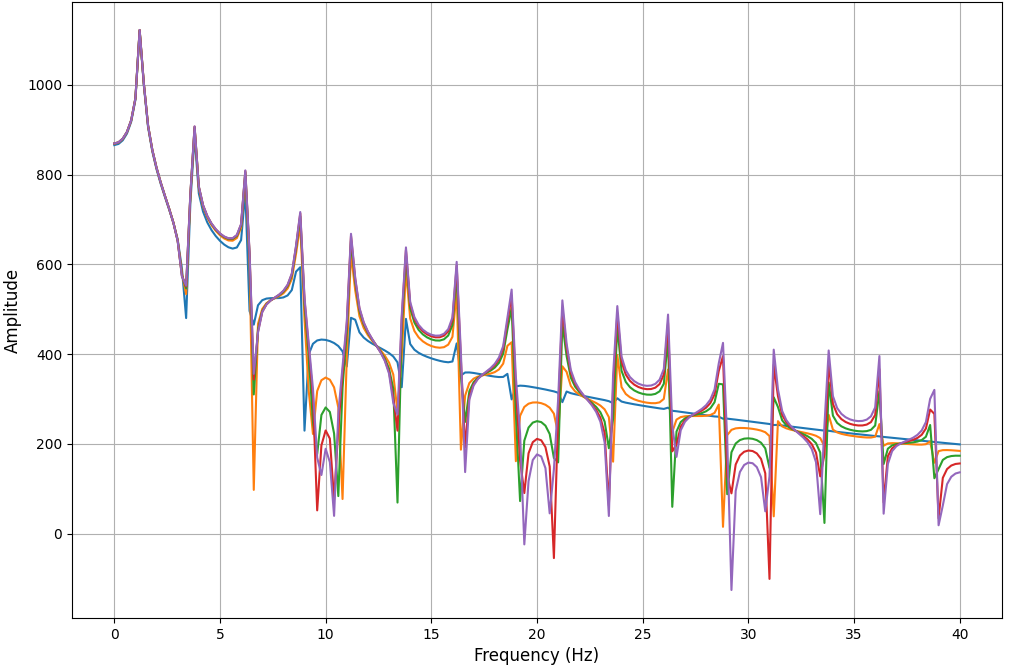}
        \includegraphics[width=0.24\linewidth,trim={0.8cm 0.6cm 0.cm 0.cm},clip]{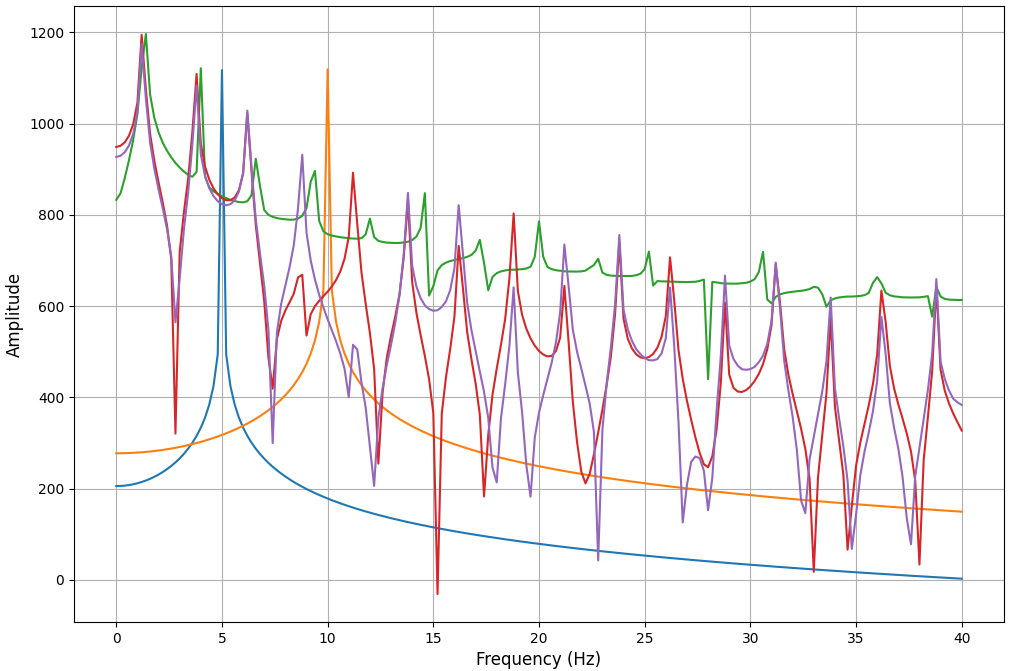}
        \includegraphics[width=0.24\linewidth,trim={0.8cm 0.6cm 0.cm 0.cm},clip]{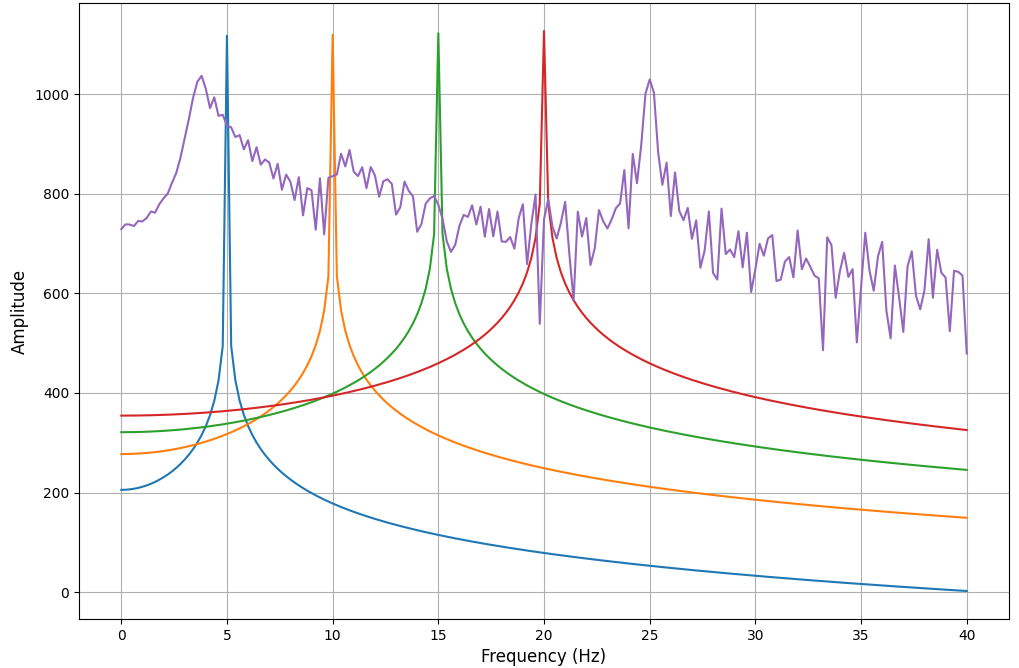}
    \caption{Spectral response of skew-symmetric 2x2 RNN with selected angular frequencies (different colors). The y-axis is the amplitude (log-scale), while the x-axis is the frequency (Hz). The first two plots from the left refer to the linear and the tanh cases, respectively. Still proceeding \emph{left-to-right} we can see the case of hard-tanh activation function for $w_m=10$ and $w_m=25$ respectively.}
    \label{fig:Three-nonlin-spectrum}
\end{figure*}

\parafango{High-dimensional generation}
We now investigate the process of generation of recurrent neural networks in the interesting case of higher hidden dimensions. The theoretical results given in the paper are confirmed in our simulation: skew-symmetric matrices always yield generation, even though Euler's discretization issues arises (see Fig.~\ref{fig:role_of_neurons}-left). Increasing the number of hidden neurons $n$ provides the network a more complex spectrum, which yields the generation of less trivial signals, as shown by the last two images. Here the value of $n$ is 20 and 40 respectively, and we can clearly see that both produce a response which is no longer a simple sinusoid. We recall that in the two dimensional case, a circular path with constant angular velocity on the state space results in a sinusoidal evolution of both the state components; increasing the number of hidden units provides the network the ability to produce oscillations with more frequency components, as also clearly depicted by the STFT spectrums in the second row of the image. Interestingly, these experiments show that increasing the number of neurons leads to better numerical stability, while the reduction in the fundamental frequency of the generated spectrum is related to the fact that the weight matrix $W$ was initialized in a uniform range inversely proportional to the number of connections.

\section{Conclusions}\label{sec:concl}
This study explores the continuous-time dynamics of RNNs, with an emphasis on systems characterized by nonlinear activation functions. Our findings demonstrate that skew-symmetric weight matrices play a crucial role in enabling perpetual oscillatory behavior, in both linear and nonlinear RNNs. Moreover, the usage of odd, bounded and continuous activation functions, like the hyperbolic tangent and its rectified counterpart, preserves these oscillatory patterns by maintaining limit cycles in state space. We observed that nonlinear activation functions not only preserve the oscillatory nature of these networks, but also contribute to enhance the numerical stability of dynamical system integration through forward Euler method. On the other hand, non-odd activation functions such as sigmoid and ReLU disrupt the closed orbits inherent to the linear case, often resulting in divergent behavior. These results reinforce the importance of activation function selection in maintaining dynamic stability and achieving desired oscillatory properties in RNN models. In conclusion, this work highlights the interplay between system architecture and dynamic behavior in RNNs, paving the way for more robust implementations in applications where stable, oscillatory behavior is crucial.

\section{Acknoweldgements}
This work was supported by the University of Siena (Piano per lo Sviluppo della Ricerca - PSR 2024, F-NEW FRONTIERS 2024), under the project ``TIme-driveN StatEful Lifelong Learning'' (TINSELL) and also by the project ``CONSTR: a COllectionless-based Neuro-Symbolic Theory for learning and Reasoning'', PARTENARIATO ESTESO ``Future Artificial Intelligence Research - FAIR'', SPOKE 1 ``Human-Centered AI'' Universit\`a di Pisa,  ``NextGenerationEU'', CUP I53C22001380006.

\begin{figure*}[h]
    \raggedright
        \includegraphics[width=0.3\linewidth,trim={0.8cm 0.6cm 0.2cm 0.1cm},clip]{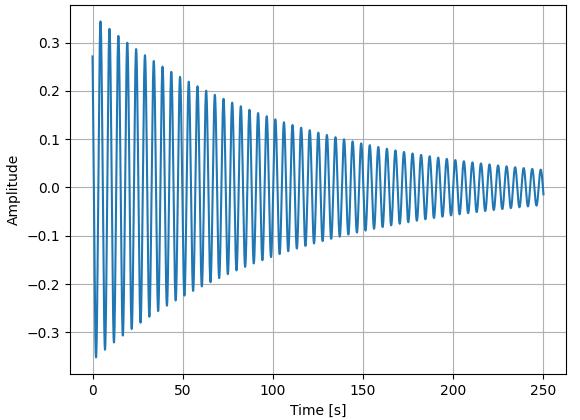}
        \includegraphics[width=0.3\linewidth,trim={0.8cm 0.6cm 0.cm 0.cm},clip]{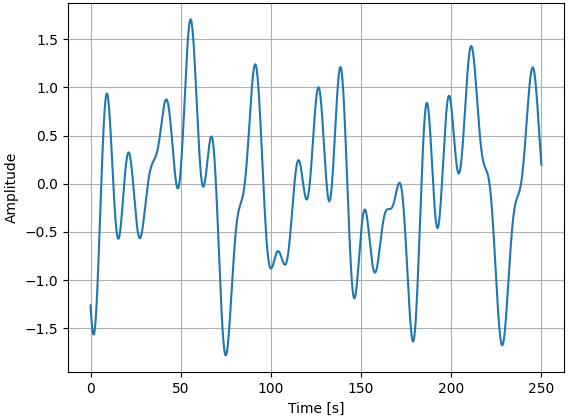}
        \includegraphics[width=0.3\linewidth,trim={0.6cm 0.6cm 0.cm 0.cm},clip]{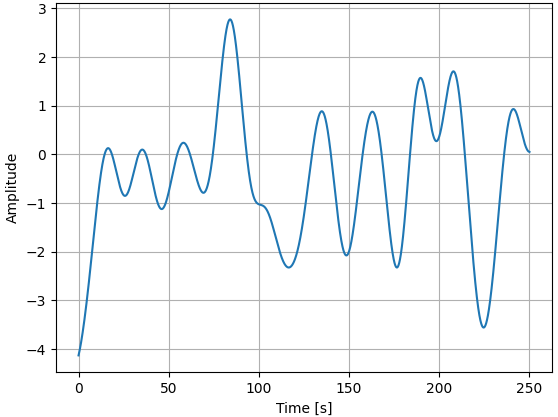}
    \vskip\baselineskip
        \hskip 2mm \includegraphics[width=0.287\linewidth,trim={0.8cm 0.8cm 5.2cm 0.9cm},clip]{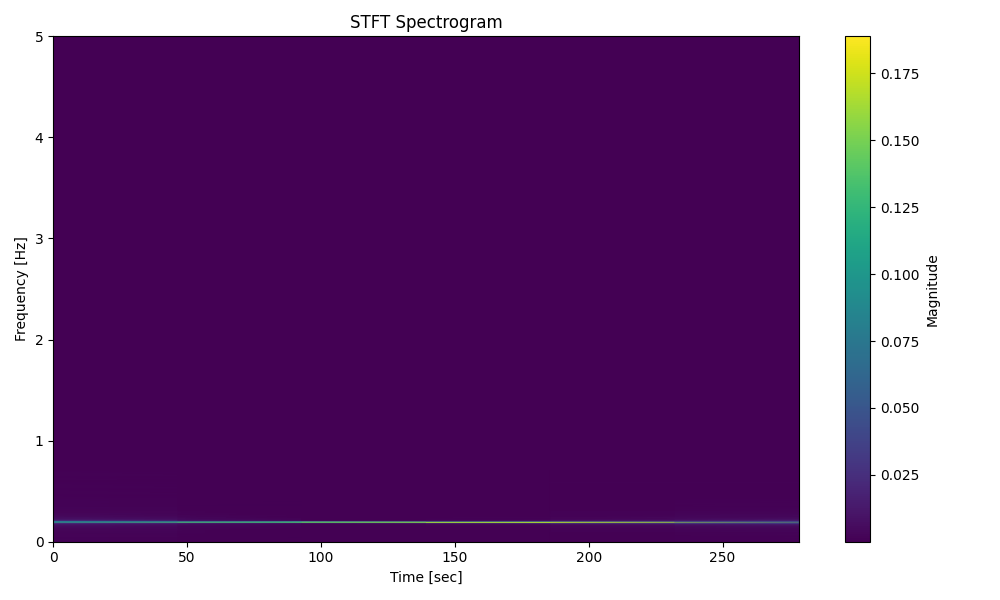}
        \hskip 2mm
        \includegraphics[width=0.287\linewidth,trim={0.6cm 0.4cm 3.6cm 0.6cm}, clip]{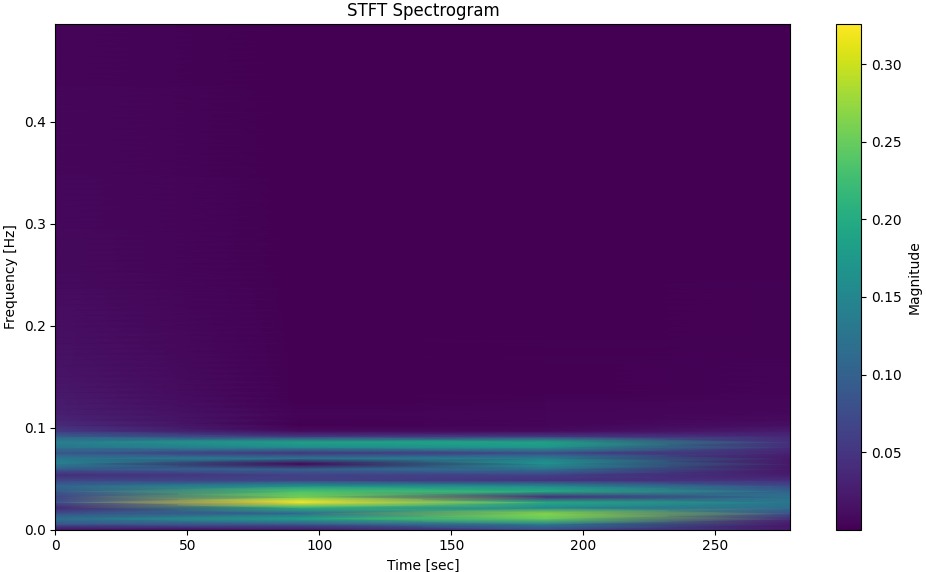}
    \hskip 2mm
        \includegraphics[width=0.332\linewidth,trim={0.6cm 0.6cm 0.cm 0.5cm},clip]{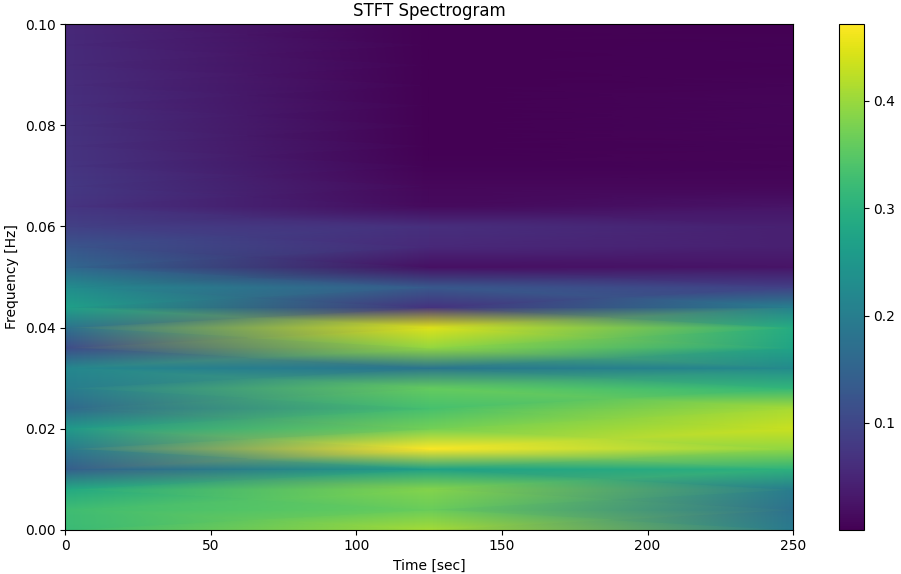}
    \caption{Instability arising from the Euler's discretization (\emph{top left}) is mitigated by increasing RNN with skew-symmetric matrices and tanh activation function ensures perpetual generation; however, issues arise with Euler's discretization (\emph{left}). The neural response (top-row, x-axis: time, y-axis: output) is shown to increase as the number of neurons grows (the hidden dimensions are $n=\left\{2,20,40\right\}$ from left to right). As the dimension increases, numerical stability improves, and the spectrum becomes richer but remains concentrated at lower frequencies. The lower row is the spectrum of the generated signal computed with the Short-time FFT, with yellower points highlighting frequencies with bigger coefficients (x-axis: time, y-axis: frequency (Hz)).}
\label{fig:role_of_neurons}
\end{figure*}

\bibliographystyle{plain}
\bibliography{main}

\end{document}